\documentclass[twoside,11pt]{article}
\usepackage{jair, theapa, rawfonts}

\usepackage{graphicx}
\usepackage{booktabs}
\usepackage{amssymb}
\usepackage{amsmath}
\usepackage{diagbox}
\usepackage{siunitx} 
\usepackage[linesnumbered,ruled,vlined,algosection]{algorithm2e}
\usepackage{graphicx}
\usepackage{soul}
\usepackage{url}
\usepackage{todonotes}

\usepackage{amsthm}
\usepackage{xfrac}
\usepackage{multirow}
\usepackage{comment}
\usepackage{wrapfig}
\usepackage{enumitem}

\usepackage[T1]{fontenc} 

\DeclareMathOperator{\E}{\mathbb{E}}
\DeclareMathOperator{\I}{I}
\DeclareMathOperator{\h}{H}

\newtheorem{lemma}{Lemma}
\numberwithin{lemma}{section}

\newtheorem{theorem}{Theorem}
\newtheorem{property}{Property}
\numberwithin{property}{section}
\newtheorem*{theorem*}{Theorem}
\theoremstyle{remark}
\newtheorem{remark}{Remark}

\newcommand{\vn}[1]{{\color{blue}{VN:#1}}}

\jairheading{72}{2021}{1343-1384}{03/2021}{12/2021}
\ShortHeadings{Language Modeling and Linguistic Structures}
{Nikoulina, Tezekbayev, Kozhakhmet, Babazhanova, Gall\'e, \& Assylbekov}
\firstpageno{1343}

\begin{document}



\title{The Rediscovery Hypothesis: \\ Language Models Need to Meet Linguistics}

\author{\name Vassilina Nikoulina \email vassilina.nikoulina@naverlabs.com \\
\addr NAVER LABS Europe\\
6-8 chemin de Maupertuis, 38240 Meylan, France
\AND
\name Maxat Tezekbayev \email maxat.tezekbayev@nu.edu.kz \\
\name Nuradil Kozhakhmet \email nuradil.kozhakhmet@nu.edu.kz
\\
\name Madina Babazhanova \email madina.babazhanova@nu.edu.kz \\
\addr 
Nazarbayev University\\
53 Kabanbay Batyr ave., Nur-Sultan Z05H0P9, Kazakhstan
\AND 
\name Matthias Gall\'e \email matthias.galle@naverlabs.com \\
\addr NAVER LABS Europe\\
6-8 chemin de Maupertuis, 38240 Meylan, France
\AND
\name Zhenisbek Assylbekov \email zhassylbekov@nu.edu.kz\\
\addr 
Nazarbayev University\\
53 Kabanbay Batyr ave., Nur-Sultan Z05H0P9, Kazakhstan
}


\maketitle

\begin{abstract}
There is an ongoing debate in the NLP community whether modern language models contain linguistic knowledge, recovered through so-called \textit{probes}.
In this paper, we study whether 
linguistic knowledge is a necessary condition for the good performance of modern language models, which we call the \textit{rediscovery hypothesis}.

In the first place, we show that language models that are significantly compressed but perform well on their pretraining objectives retain good scores when probed for linguistic structures. This result supports the rediscovery hypothesis and leads to the second contribution of our paper: an information-theoretic framework that relates language modeling objectives with linguistic information. This framework also provides a metric to measure the impact of linguistic information on the word prediction task. We reinforce our analytical results with various experiments, both on synthetic and on real NLP tasks in English.
\end{abstract}

\section{Introduction}

Vector representations of words obtained from self-supervised pretraining of neural language models (LMs) on massive unlabeled data have revolutionized NLP in the last decade. This success has spurred an interest in trying to understand what type of knowledge these models actually learn~\shortcite{rogers2020primer,petroni2019}. 

Of particular interest here is ``linguistic knowledge'', which is generally measured by annotating test sets through experts (linguists) following certain pre-defined linguistic schemas.
These annotation schemas are based on language regularities manually defined by linguists. 
On the other side, we have language models which are pretrained predictors that assign a probability to the presence of a token given the surrounding context.
Neural models solve this task by finding patterns in the text.
We refer to the claim that such neural language models rediscover linguistic knowledge as the \textit{rediscovery hypothesis}.
It stipulates that the patterns of the language discovered by the model trying to solve the pretraining task correlate with the human-defined linguistic regularity.
In this work we measure the amount of linguistics rediscovered by a pretrained model through the so-called probing tasks: hidden layers of a neural LM are fed to a simple classifier---a probe---that learns to predict a linguistic structure of interest~\shortcite{ettinger2016,adi2017fine,conneau2018you,hewitt2019structural,tenney2019bert}.
In the first part of this paper (Section~\ref{sec:lth_probing}) we attempt to challenge the \textit{rediscovery hypothesis} through a variety of experiments to understand to what extent it holds. 
Those experiments aim to verify whether the path through language regularity is indeed the one taken by pretrained LMs or whether there is another way to reach good LM performance without rediscovering linguistic structure. 
Our experiments show that pretraining loss is indeed tightly linked to the amount of linguistic structure discovered by an LM. We, therefore, fail to reject the rediscovery hypothesis. 

This negative attempt, as well as the abundance of positive examples in the literature, motivates us to prove mathematically the rediscovery hypothesis.
In the second part of our paper (Section~\ref{sec:theory}) we use information theory to prove the contrapositive of the hypothesis---removal of linguistic information from an LM degrades its performance. Moreover, we show that the decline in the LM quality depends on how strongly the removed property is interdependent with the underlying text: a greater dependence leads to a greater drop.
We confirm this result empirically, both with synthetic data and with real annotations on English text.

The result that removing information that contains strong mutual information with the underlying text degrades (masked) word prediction might not seem surprising \textit{a posteriori}.
However, it is this surprise that lies at the heart of most of the work in recent years around the discovery of how easily this information can be extracted from intermediate representations. 
Our framework also provides a coefficient that measures the dependence between a probing task and the underlying text.
This measure can be used to determine more complex probing tasks, whose rediscovery by language models would indeed be surprising.

\section{Do Pretraining Objectives Correlate with Linguistic Structures?}\label{sec:lth_probing}

The first question we pose in this work is the following: is the rediscovery of linguistic knowledge mandatory for models that perform well on their pretraining tasks, typically language modeling or translation; or is it a side effect of overparameterization?\footnote{Overparamterization is defined informally as ``having more parameters than can be estimated from the data'', and therefore using a model richer than necessary for the task at hand. Those additional parameters could be responsible for the good performance of the probes.}

We analyze the correlation between linguistic knowledge and LM performance with pruned pretrained models. By compressing a network through pruning we retain the same overall architecture and can compare probing methods.
More important, we hypothesize that pruning removes all unnecessary information with respect to the pruning objective (language modeling) and that it might be that information that is used to rediscover linguistic knowledge. To complete this experiment, we also track the pruning efficiency \textit{during} pretraining.



\subsection{Pruning Method}

The lottery ticket hypothesis (LTH) of \shortciteA{frankle2018the} claims that a randomly-initialized neural network $f(\mathbf{x};\boldsymbol\theta)$ with trainable parameters $\boldsymbol\theta\in\mathbb{R}^n$ contains subnetworks $f(\mathbf{x};\mathbf{m}\odot\boldsymbol\theta)$, $\mathbf{m}\in\{0,1\}^n$, such that, when trained in isolation, they can match the performance of the original network.\footnote{$\|\mathbf{x}\|_0$ is a number of nozero elements in $\mathbf{x}\in\mathbb{R}^d$, $\odot$ is the element-wise multiplication.} The authors suggest a simple procedure for identifying such subnetworks (Alg.~\ref{alg:lth}). 
When this procedure is applied iteratively (Step~\ref{step:iter} of Alg.~\ref{alg:lth}), we get a sequence of pruned models $\{f(\mathbf{x};\mathbf{m}_i\odot\boldsymbol\theta_0)\}$ in which each model has fewer parameters than its predecessor: $\|\mathbf{m}_{i}\|_0<\|\mathbf{m}_{i-1}\|_0$. 
\citeauthor{frankle2018the} used iterative pruning for image classification models and found subnetworks that were 10\%--20\% of the sizes of the original networks and met or exceeded their validation accuracies. 
Such a compression approach retains weights important for the main task while discarding others.
We hypothesize that it might be those additional weights that contain the signals used by probes. 
But, if the subnetworks retain linguistic knowledge then this is evidence in favor of the rediscovery hypothesis.


\begin{algorithm*}[htbp]
\DontPrintSemicolon
Randomly initialize a neural network $f(\mathbf{x};\boldsymbol\theta_0)$, $\boldsymbol\theta_0\in\mathbb{R}^n$\;
Train the network for $j$ iterations, arriving at parameters $\boldsymbol\theta_j$.\label{step:train}\;
Prune $p\%$ of the parameters in $\boldsymbol\theta_j$, creating a mask $\boldsymbol{m}\in\{0,1\}^n$.\;
Reset the remaining parameters to their values in $\boldsymbol\theta_0$, creating the winning ticket $f(\mathbf{x}; \boldsymbol{m}\odot\boldsymbol{\theta}_0)$.\;
Repeat from \ref{step:train} if performing iterative pruning.\label{step:iter}\;
Train the winning ticket $f(\mathbf{x};\mathbf{m}\odot\boldsymbol\theta_0)$ to convergence.
\caption{Lottery ticket hypothesis---Identifying winning tickets \shortcite{frankle2018the}}
\label{alg:lth}
\end{algorithm*}

\subsection{Models}
\label{sec:models}
We explore one static embedding model, {\sc SGNS}, and two contextualized embedding models, {\sc CoVe} and {\sc RoBERTa}. In this manner, we cover the full spectrum of modern representations of words, from static embeddings to shallow and deep contextual embeddings.

\paragraph{\textsc{CoVe} \shortcite{mccann2017learned}} uses the top-level activations of a two-layer BiLSTM encoder from an attentional sequence-to-sequence model \shortcite{bahdanau2014neural} trained for English-to-German translation. The authors used the CommonCrawl-840B {\sc GloVe} model \shortcite{pennington2014glove} for English word vectors, which were completely fixed
during pretraining, and we follow their setup. This entails that the embedding layer on the source side is not pruned during the LTH procedure. We also concatenate the encoder output with the {\sc GloVe} embeddings as is done in the original paper \shortcite[Eq.~6]{mccann2017learned}. 

\paragraph{BERT \shortcite{devlin2019bert}} is a deep Transformer \shortcite{vaswani2017attention} encoder that has become the \textit{de facto} standard when it comes to contextualized embeddings. We pretrain the {\sc RoBERTa} variant of the {\sc BERT} model \shortcite{liu2019roberta}. {\sc RoBERTa} stands for robustly optimized {\sc BERT} which was trained with hyperparameters optimized for convergence, with dynamic masking instead of static masking, and restricted to masked LM objective only.\footnote{{\sc BERT} has next sentence prediction loss in addition.} Unlike their predecessors, such as {\sc CoVe} and {\sc ELMo} \shortcite{peters2018deep}, {\sc BERT} and other Transformer-based encoders are considered deep contextualizers.


\paragraph{Word2vec \textsc{SGNS} \shortcite{mikolov2013distributed}} is a shallow two-layer neural network that produces uncontextualized word embeddings. It is widely accepted that the {\sc SGNS} vectors capture words semantics to a certain extent, which is confirmed by the folklore examples, such as $\mathbf{w}_{\text{king}} - \mathbf{w}_{\text{man}} + \mathbf{w}_{\text{woman}} \approx \mathbf{w}_{\text{queen}}$. So, the question of the relationship between the {\sc SGNS} objective function and its ability to discover linguistics is also relevant.

\subsection{Measuring the Amount of Linguistic Knowledge}
\label{sec:probes}

To properly measure the \textit{amount of linguistic knowledge} in word vectors we define it as the performance of classifiers (probes) that take those vectors as input and are trained on linguistically annotated data. This definition has the advantage of being able to be measured exactly, at the cost of avoiding the discussion of whether POS tags or syntactic parse trees indeed denote linguistic knowledge captured by humans in their learning process.

We should note that this probing approach has received a lot of criticism recently \shortcite{hewitt2019designing,pimentel2020information,voita2020information} due to its inability to distinguish between information encoded in the pretrained vectors from the information learned by the probing classifier. However, in our study, the question is {not} \textit{How much linguistics is encoded in the presentation vector?}, but rather \textit{Does one vector contain more linguistic information than the other?} We compare different representations of the \textit{same} dimensionality using probing classifiers of the \textit{same} capacity. Even if part of the probing performance is due to the classifier itself we claim that the difference in the probing performance will be due to the difference in the amount of linguistic knowledge encoded in the representations we manipulate. This conjecture is strengthened by the findings of \shortciteA{zhang2020billions} who
analyzed the representations from pretrained {\sc miniBERTa}s\footnote{{\sc RoBERTa} models trained on a varied amount of training data} and demonstrated that the trends found through edge probing \shortcite{tenney2018what} are the same as those found through better-designed probes such as Minimum Description Length \shortcite{voita2020information}. Therefore in our work we adopt \textit{edge probing} and \textit{structural probing} for contextualized embeddings. For static embeddings, we use the traditional \textit{word similarity} and \textit{word analogy} tasks.

\paragraph{Edge probing \shortcite{tenney2018what}} formulates several linguistics tasks of different nature as text span classification tasks. The probing model is a lightweight classifier on top of the pretrained representations trained to solve those linguistic tasks. In our study, we use the part-of-speech tagging (POS), constituent labeling, named entity labeling (NE), and semantic role labeling (SRL) tasks from the suite, in which a probing classifier receives a sequence of tokens and predicts a label for it. 
For example, in the case of constituent labeling, for a sentence 
\textit{This probe }$[\textit{discovers linguistic knowledge}]$,
the sequence in square brackets should be labeled as a verb phrase. 

\paragraph{Structural probing \shortcite{hewitt2019structural}} evaluates whether syntax trees are embedded in a linear transformation of a
neural network's word representation space. The probe identifies a linear transformation under which squared Euclidean distance encodes the distance between words in the parse tree. \citeauthor{hewitt2019structural} show
that such transformations exist for both {\sc ELMo}
and {\sc BERT} but not in static baselines, providing
evidence that entire syntax trees can be easily extracted from the vector geometry of deep models.

\paragraph{Word similarity \shortcite{finkelstein2002placing} and word analogy \shortcite{DBLP:conf/naacl/MikolovYZ13}} tasks can be considered as nonparametric probes of static embeddings, and---differently from the other probing tasks---are not learned. The use of word embeddings in the word similarity task has been criticized for the instability of the results obtained \shortcite{DBLP:journals/tacl/AntoniakM18}. Regarding the word analogy task, \shortciteA{DBLP:conf/naacl/Schluter18} raised concerns on the misalignment of assumptions in generating and
testing word embeddings. However, the success of the static embeddings in performing well in these tasks was a crucial part of their widespread adoption.

\subsection{Experimental Setup}

We prune the embedding models from Section~\ref{sec:models} with the LTH algorithm (Alg.~\ref{alg:lth}) and evaluate them with probes from Section~\ref{sec:probes} at \textit{each} pruning iteration. {\sc CoVe} and {\sc SGNS} are pruned iteratively, while for {\sc RoBERTa} we perform one-shot pruning at different rates.\footnote{This is done to speedup experiments on {\sc RoBERTa}, as one-shot pruning at different rates can be run in parallel.} Assuming that $\ell^{\omega}_{i}$ is the validation loss of the embedding model $\omega\in\{${\sc CoVe}, {\sc RoBERTa}, {\sc SGNS}$\}$ at iteration $i$, and $\Delta s^{\omega,T}_{i}:=s^{\omega,T}_{i}-s^{\omega,T}_{0}$ is the drop in the corresponding score on the probing task $T\in\{$NE, POS, Const., Struct., Sim., Analogy$\}$ compared to the score $s^{\omega,T}_{0}$ of the baseline (unpruned) model, we obtain pairs $(\ell^{{\omega}}_{i}, \Delta s^{{\omega},{T}}_{i})$ for further analysis. Keep in mind that {\sc RoBERTa} and {\sc SGNS} are pruned in full, while in {\sc CoVe} we prune everything except the source-side embedding layer. This exception is due to the design of the {\sc CoVe} model, and we follow the original paper's setup \shortcite{mccann2017learned}.

\paragraph{Software and datasets.} We pretrain {\sc CoVe} on the English--German part of the IWSLT 2016 machine translation task \shortcite{cettolo2016iwslt} using the {\sc OpenNMT-py} toolkit \shortcite{klein2017opennmt}. 
{\sc RoBERTa} is pretrained on the WikiText-103 dataset (English)~\shortcite{merity2016pointer} 
using the {\sc fairseq} toolkit \shortcite{ott2019fairseq} with default training settings \shortcite{RoBERTa}. Finally, the {\sc SGNS} model is pretrained on the \texttt{text8} data (English as well) \shortcite{text8} 
using our custom implementation \shortcite{sgns}.

The edge probing classifier is trained on the standard benchmark dataset OntoNotes 5.0 \shortcite{weischedel2013ontonotes} using the {\sc jiant} toolkit \shortcite{DBLP:conf/acl/PruksachatkunYL20}. The structural probe is trained on the English UD \shortcite{silveira14gold} using the code from the authors \shortcite{structprobe}. For word similarities we use the WordSim353 dataset \shortcite{finkelstein2002placing}, while for word analogies we use the Google dataset \shortcite{mikolov2013efficient}.
All those datasets are in English.

\paragraph{Optimization} is performed in almost the same way as in the original works on {\sc CoVe}, {RoBERTa}, and {SGNS}. See Appendix~\ref{app:optimization} for details. 

\subsection{Results}
\label{sec:results1}
First, we note that the lottery ticket hypothesis is confirmed for the embedding models since pruning up to 60\% weights does not harm their performance significantly on held-out data (Fig.~\ref{fig:lth_results}).\footnote{In the case of {\sc SGNS}, pruning up to 80\% of weights does not affect its validation loss. Since solving {\sc SGNS} objective is essentially a factorization of the pointwise mutual information matrix, in the form $\mathrm{PMI}-\log k\approx\mathbf{WC}$ \cite{DBLP:conf/nips/LevyG14}, this means that a factorization with sparse $\mathbf{W}$ and $\mathbf{C}$ is possible. 
This observation complements the findings of \shortciteA{DBLP:conf/aaai/TissierGH19} who showed that near-to-optimal factorization is possible with binary $\mathbf{W}$ and $\mathbf{C}$.} 
\begin{figure}[htbp]
\begin{minipage}{\textwidth}
\includegraphics[width=.33\textwidth]{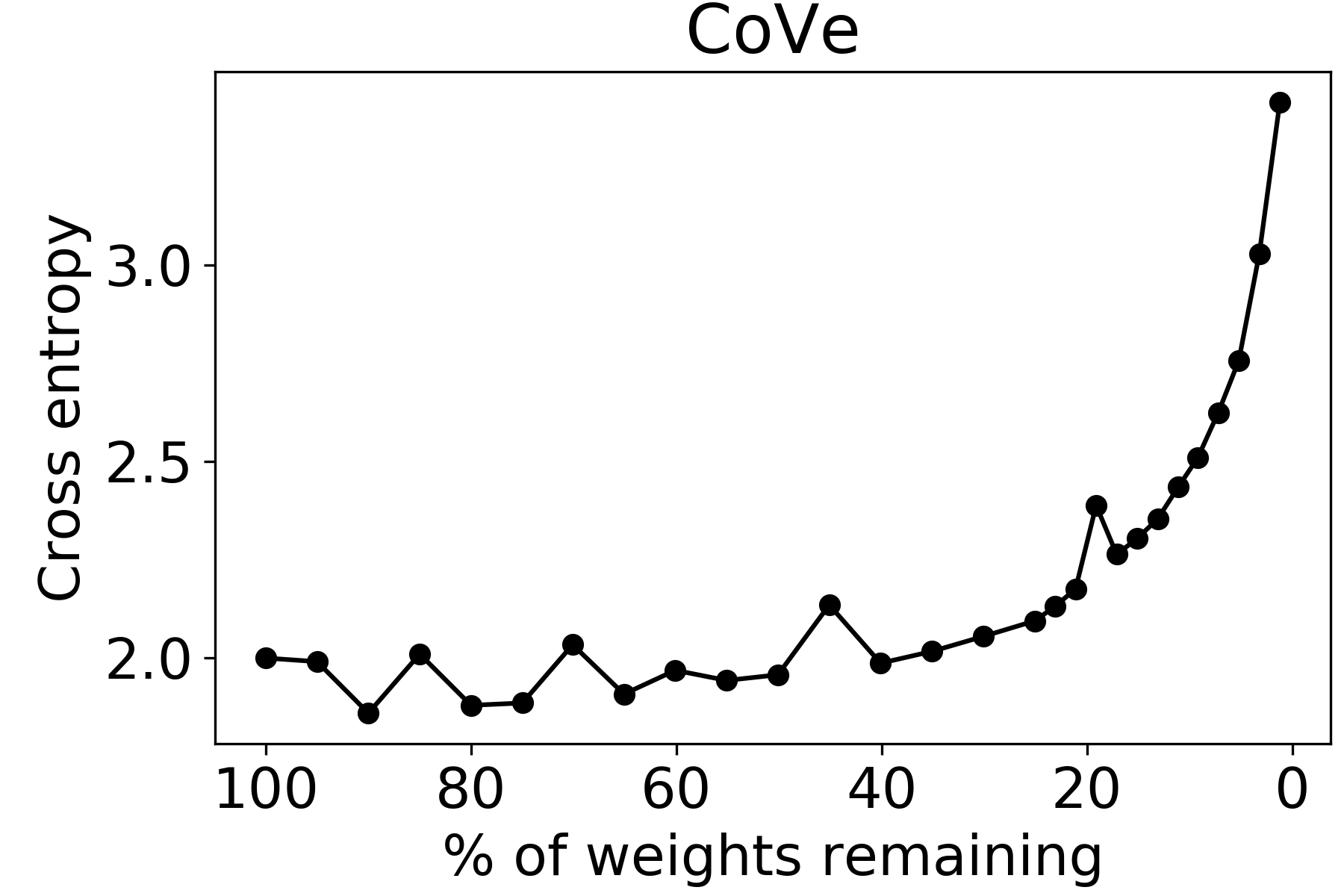}\hfill
\includegraphics[width=.33\textwidth]{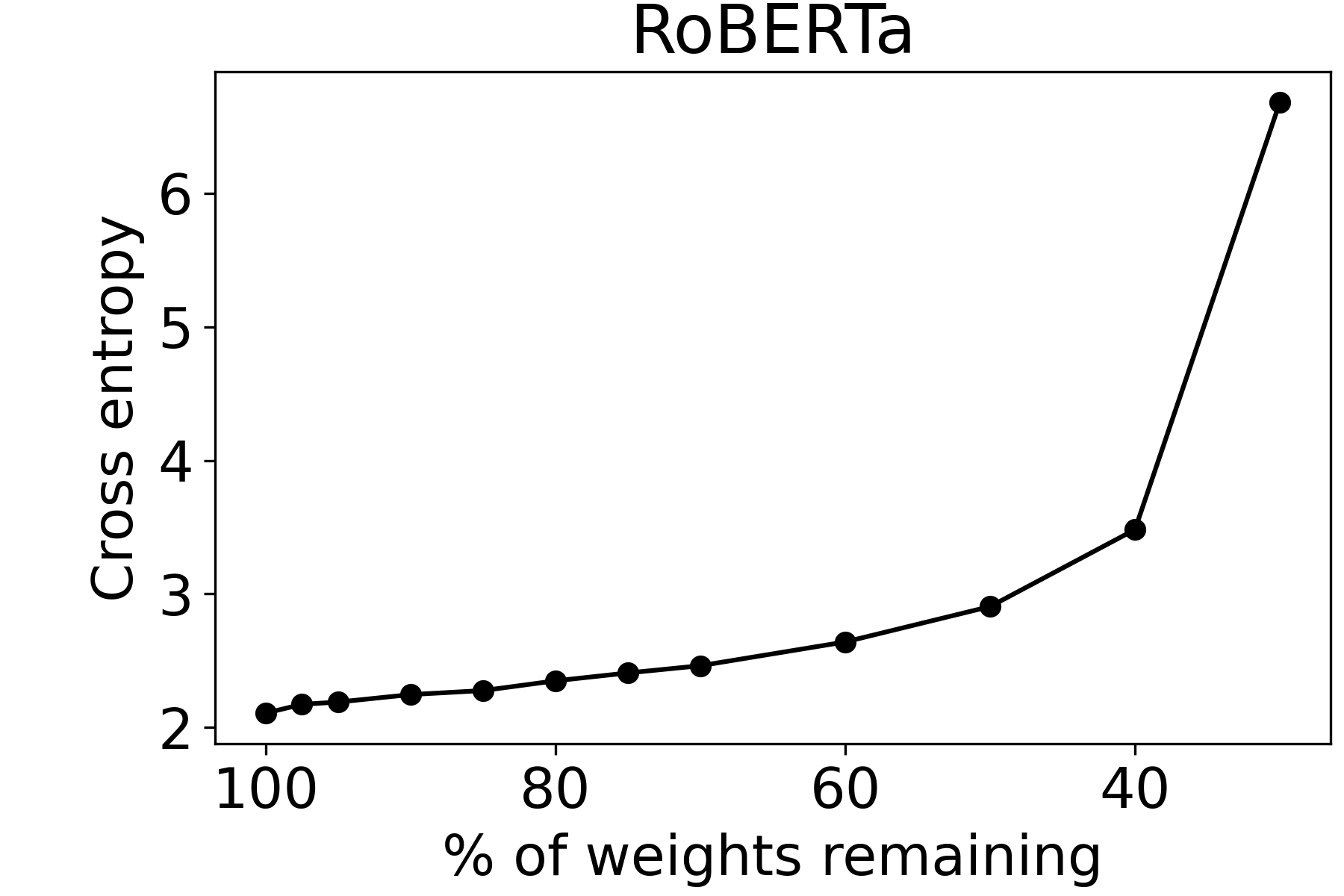}\hfill
\includegraphics[width=.33\textwidth]{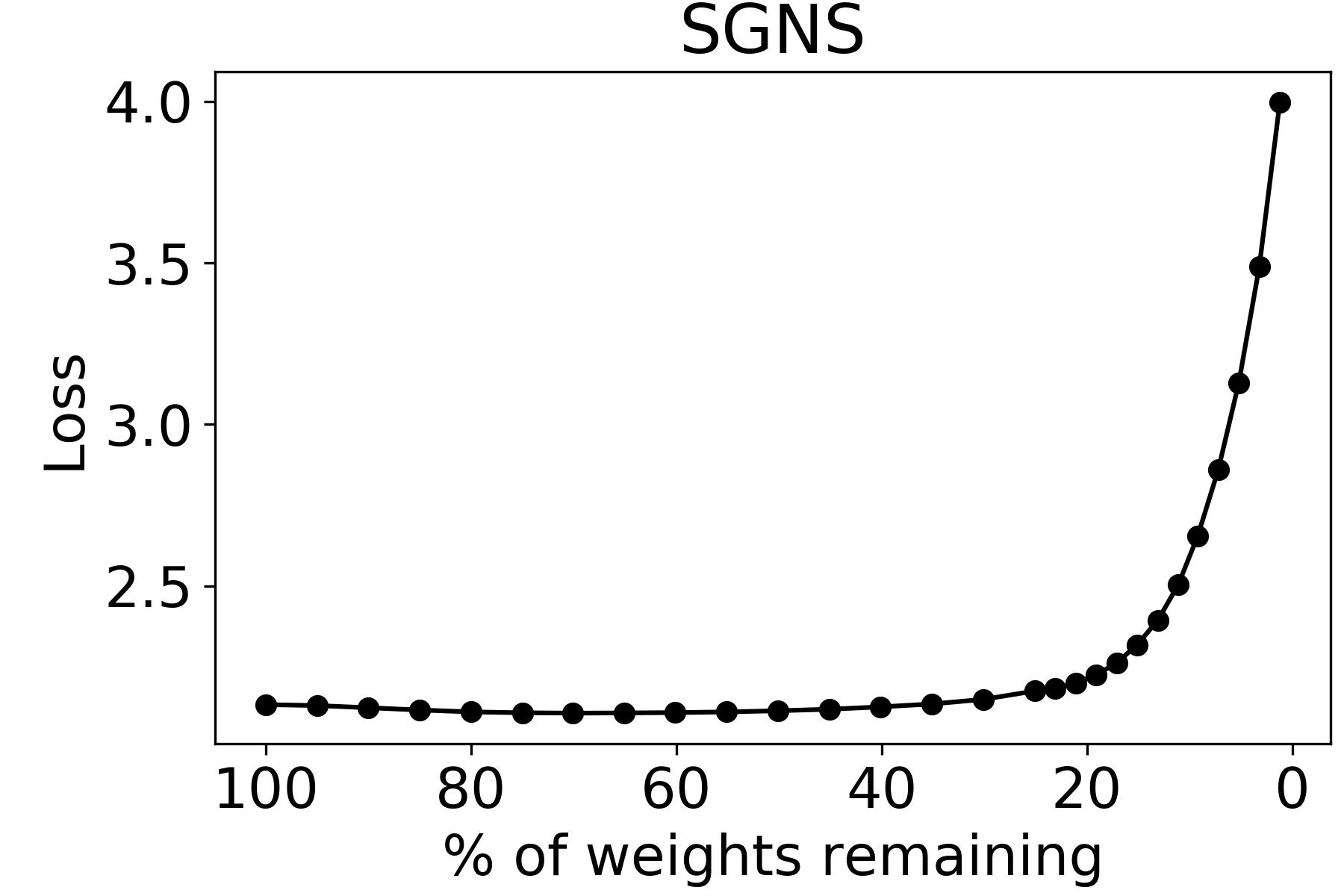}

\caption{Results of applying the LTH (Algorithm~\ref{alg:lth}) to {\sc CoVe}, {\sc RoBERTa}, and {\sc SGNS}. In each case we scatter-plot the percentage of remained weights vs validation loss, which is cross entropy for {\sc CoVe} and {\sc RoBERTa}, and a variant of negative sampling objective for {\sc SGNS}.}
\label{fig:lth_results}
\end{minipage}

\begin{minipage}{\textwidth}
\begin{center}
\includegraphics[width=.45\textwidth]{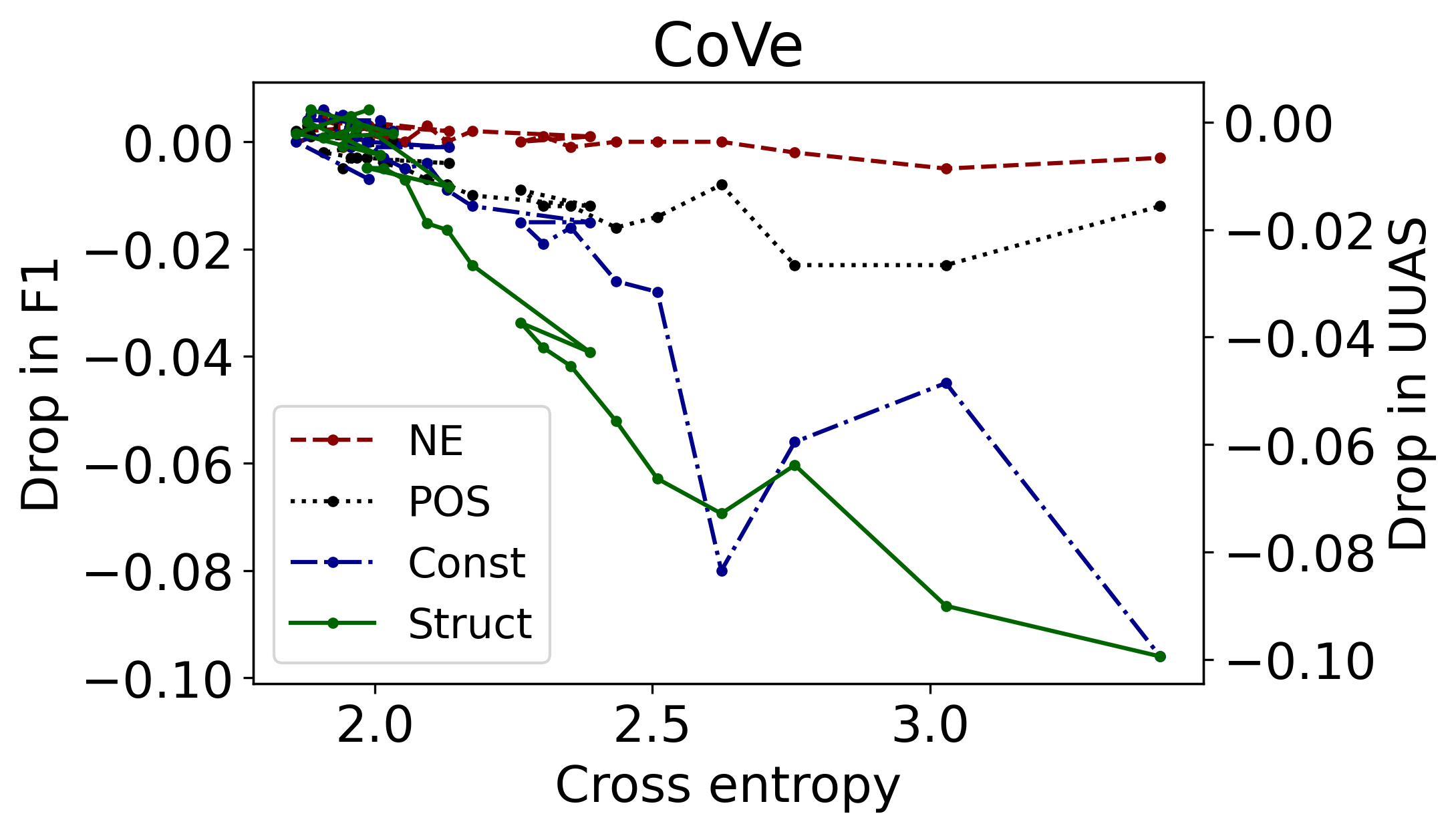}\hspace{10pt}\includegraphics[width=.45\textwidth]{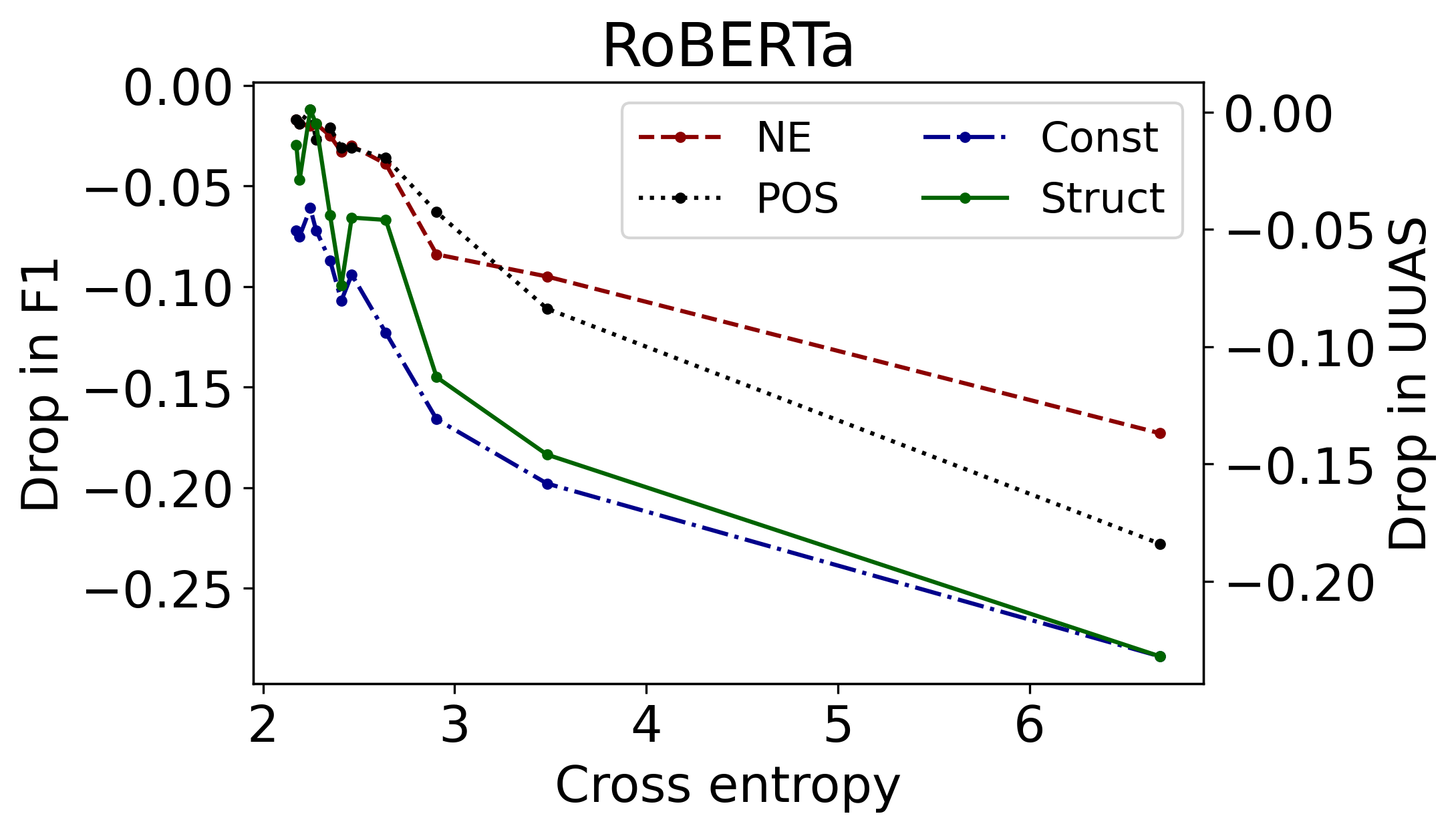}
\end{center}
\caption{Probing results for the {\sc CoVe} and {\sc RoBERTa} embeddings. Horizontal axes indicate validation loss values, which are cross-entropy values. Vertical axes indicate drops in probing performances. In case of edge probing, we use the NE, POS, and constituent labeling tasks from the suite of \shortciteA{tenney2018what} and report the drop in micro-averaged F1 score compared to the baseline (unpruned) model. In case of structural probing, we use the distance probe of \shortciteA{hewitt2019structural} and report the drop in undirected unlabeled attachment score (UUAS).}
\label{fig:edge_probing}
\end{minipage}

\begin{minipage}[t]{\textwidth}
\begin{center}
\includegraphics[width=.5\textwidth]{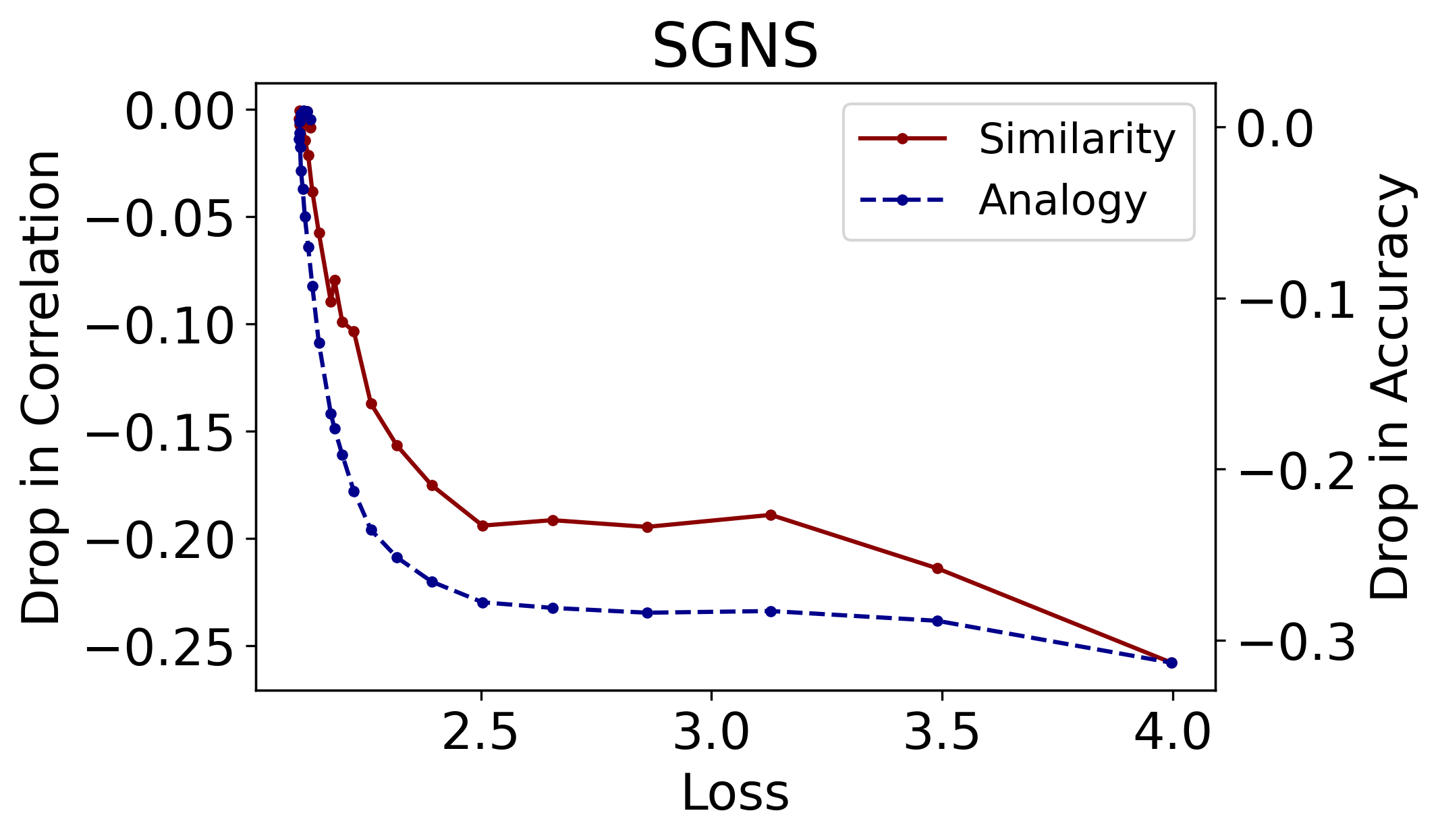}
\caption{Similarity and analogy results for the {\sc SGNS} embeddings. For similarities we report the drop in Spearman's correlation with the human ratings and for analogies in accuracy.}
\label{fig:nonparametric}
\end{center}
\end{minipage}
\end{figure}

Before proceeding to the results of probing of pruned models, we want to note that the probing scores of the baseline (unpruned) models obtained by us are close to the scores from the original papers \cite{tenney2018what,hewitt2019structural}, and are shown in Table~\ref{tab:baseline_scores}.
\begin{table}[htbp]
\begin{minipage}[t]{.55\textwidth}
\centering
\begin{tabular}{l c c c c c c}
\toprule
\multirow{2}{*}{Model} & \multicolumn{4}{c}{Task} \\
& NE & POS & Const. & Struct. \\
\midrule
{\sc CoVe} & .921 & .936 & .808 & .726 \\
{\sc RoBERTa} & .932 & .951 & .808 & .719 \\
\bottomrule
\end{tabular}
\end{minipage}%
\begin{minipage}[t]{.45\textwidth}
\centering
\begin{tabular}{l c c}
\toprule
\multirow{2}{*}{Model} & \multicolumn{2}{c}{Task} \\
& Similarity & Analogy \\
\midrule
{\sc SGNS} & .716 & .332 \\
\bottomrule
\end{tabular}
\end{minipage}
\caption{Probing scores for the baseline (unpruned) models. We report the micro-averaged F1 score for the POS, NE, and Constituents; undirected unlabeled attachment score (UUAS) for the structural probe; Spearman's correlation with the human ratings for the similarity task; and accuracy for the analogy task.}
\label{tab:baseline_scores}
\end{table}

Probing results are provided in Fig.~\ref{fig:edge_probing}~and~\ref{fig:nonparametric}, where we scatter-plot validation loss $\ell^{{\omega}}_i$ vs drop in probing performance $\Delta s^{{\omega},{T}}_i$ for each of the model-probe combinations.\footnote{Note, that cross-entropy values of {\sc CoVe} and {\sc RoBERTa} are not comparable as these are over different corpora, languages and vocabularies.} First, we note that, in most cases, the probing score correlates with the pretraining loss, which supports the rediscovery hypothesis. We note that the probing score decreases slower for some tasks (e.g.,\ POS tagging), but is much steeper for others (e.g.,\ constituents labeling). This is complementary to the findings of \shortciteA{zhang2020billions} who showed that the syntactic learning curve reaches plateau performance with less pretraining data while solving semantic tasks requires more training data. Our results suggest that similar behavior emerges with respect to the model size: simpler tasks (e.g.,\ POS tagging) can be solved with smaller models, while more complex linguistic tasks (e.g.,\ syntactic constituents or dependency parsing) require bigger model size. In addition, we note that in the case of {\sc CoVe}, and in contrast to {\sc RoBERTa}, the probing scores for more local (i.e.\ less context-dependent) tasks such as POS and NER hardly decrease with an increase in the pruning rate. 
We believe that this is because {\sc CoVe} representations by default contain unpruned static {\sc GloVe} embeddings, which by themselves already obtain good performance on the more local tasks.

\begin{figure}
\centering
\includegraphics[width=1\textwidth]{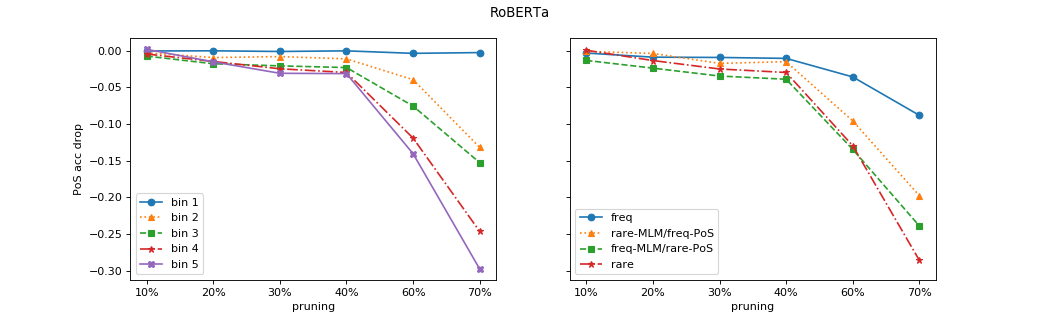}
\caption{Breakdown of POS tagging accuracy decrease by token frequency. We report the drop in accuracy compared to the baseline (unpruned) model. \textbf{Left}: all bins (1-5) have comparable cumulative counts in the pretraining data. The distribution of tokens over the bins: bin 1 --- 5 tokens, bin 2 --- 25 tokens, bin 3 --- 368 tokens, bin 4 --- 2813 tokens, bin 5 --- 600K tokens. \textbf{Right}: 4 bins of tokens which are (1) frequent both in pretraining and POS training data (153 tokens), (2) rare in pretraining but frequent in POS training data (66 tokens), (3) frequent in pretraining but rare in POS training data (235 tokens), and (4) rare everywhere (46K tokens). }
\label{fig:acc_per_tok_count}
\end{figure}

The reader might have noted that in the case of {\sc CoVe} in Fig.~\ref{fig:edge_probing} when the range of $\ell_i$'s is restricted to low values, there is a lack of correlation between $\ell_i$ and $\Delta s_i$. 
We discuss this further in Appendix~\ref{app:p_values} as we argue that this does not contradict our main finding.

\paragraph{Breakdown by token frequency.} 

The pervasiveness of Zipf's law in language has the consequence that many conclusions over aggregated scores can be attributed to effects on a small set of tokens.
Indeed, when binning tokens by their frequency in the pretraining data and analyzing the POS accuracy drop per bin it becomes obvious that the less frequent a token is the more pruning affects its POS-tag prediction.
This is shown in Fig.~\ref{fig:acc_per_tok_count} (left) where tokens are binned in 5 equally-sized groups.

A straightforward interpretation of such behavior would be: (1) pruning degrades the representations for rarer tokens in the first place. It is also possible, that (2) pruning degrades all token representations similarly; however, the probing model has the capacity to recover the POS performance for tokens that are frequent in POS tagging corpus but cannot do that for rare tokens. The first reason would support the claim that pruning removes the \textit{memorization}\footnote{To do well on the word predictions task the language model can either memorize patterns or learn certain ``language regularity'' (aka generalization). If we do an analogy with linguistic rules: there are cases when the linguistic rules ($\sim$ generalization) can be applied, and there are exceptions. The generalization has its limits, and at some point, the language model needs to memorize to boost the performance.} for rare tokens, while the second reason would mean that the pruned model redistributes its representation power across all the token groups to preserve good LM performance. The latter behavior would be more aligned with the \textit{rediscovery hypothesis}.


To better understand which of the two is a better explanation we need to distinguish between tokens which are rare in the pretraining data but are frequent in the downstream POS training data and vice-versa: the behavior on tokens which are rare in POS training data will be more informative of how pruning affects pretrained representations.

For this, we group the first three bins together in a \texttt{freq-MLM} group and the two last bins inside \texttt{rare-MLM}; and do a similar split based on the frequency of the tokens in the POS training corpus (\texttt{freq-POS} and \texttt{rare-POS}).

The impact of pruning on the four possible combinations of groups is shown in Fig.~\ref{fig:acc_per_tok_count} (right). The probing performance on the tokens which are frequent in POS tagging corpus (\texttt{freq} and \texttt{rare-MLM/freq-POS}) keep almost constant up to a pruning rate of 40\%. 
While tokens which are rare in POS tagging corpus (\texttt{freq-MLM/rare-POS} and \texttt{rare}) seem to suffer more with pruning rate growth. This lends support to the possibility that the amount of linguistics contained in the pretrained vectors decreases across all the token groups. 

Finally, we note that up to 60\% of pruning both \texttt{freq-MLM/rare-PoS} and \texttt{rare} groups behave similarly but at 70\% pruning rate the tokens that are rare everywhere (\texttt{rare}) suffer from a higher drop, compared to \texttt{freq-MLM/rare-PoS} meaning that probing model is not able to recover the correct PoS tags for rare tokens.
This suggests that \textit{memorization} gets removed from the pretrained model at a higher pruning rate, but it is exploited by the probing model at lower pruning rates.

\begin{figure}[t!]
\centering
\includegraphics[width=\textwidth]{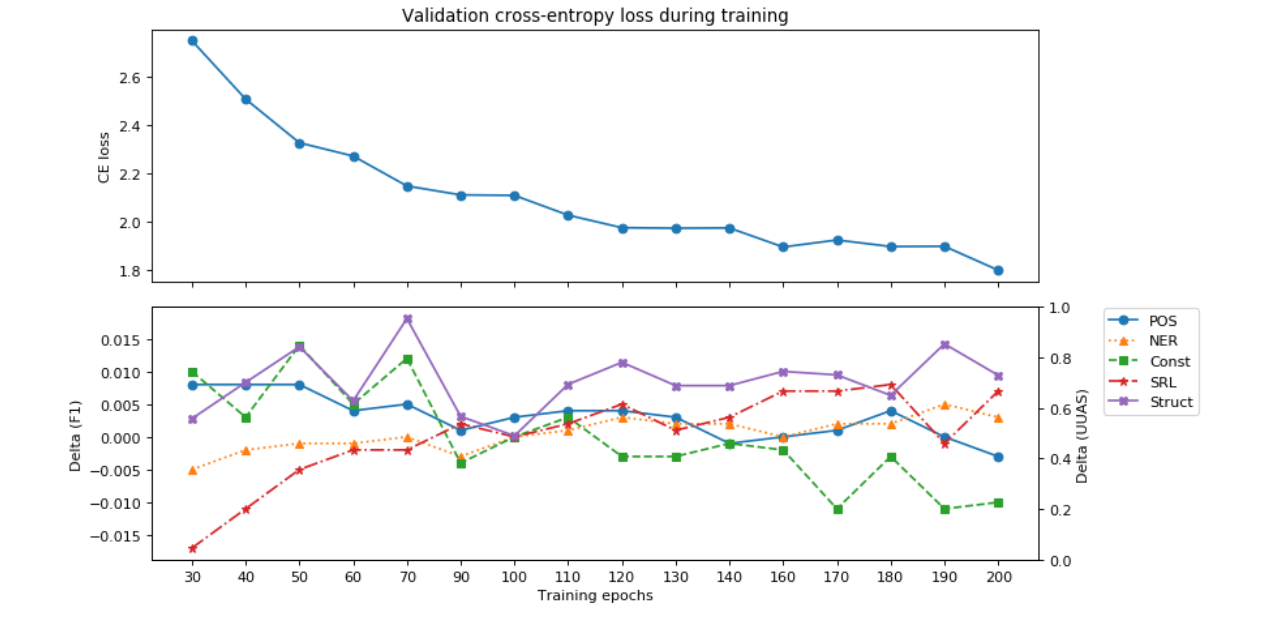}
\caption{Probing intermediate checkpoints of RoBERTa. Top: RoBERTa validation CE loss during training. Bottom: Difference between the baseline probing score\protect\footnotemark\ and probing score at each epoch (up to 200). In case of edge probing, we use the SRL, NE, POS, and constituent labeling tasks from the suite of \shortciteA{tenney2018what} and report the micro-averaged F1 score. In case of structural probing, we use the distance probe of \shortciteA{hewitt2019structural} and report the undirected unlabeled attachment score (UUAS).}
\label{fig:rob_cp_probing}
\end{figure}
\footnotetext{Baseline score corresponds to the probing with best checkpoint (around 450 epochs of training).} 

\subsection{Probing Intermediate Checkpoints} 
Until now we measured correlated probing accuracy on top of fully-trained models with the amount of pruning those models underwent.
Here we use a complementary lens, by analyzing how quickly a pretrained model discovers different types of linguistic knowledge? 
For this, we save intermediate checkpoints of RoBERTa models and probe those.
Similar to pruning, this has the advantage that all models share the same architecture.
The results of the probing accuracy over different epochs are provided in Fig.~\ref{fig:rob_cp_probing}.

Similar to \shortciteA{zhang2020billions}, we find that syntactic tasks (e.g.,\ POS tagging, dependency, and constituency parsing) seem to reach the top probing performance at the beginning of training (30-60 epochs), while semantic tasks (e.g.,\ NER, SRL) keep improving further. 
While \shortciteA{tenney2019bert} argued that ``\textit{BERT rediscovers NLP pipeline}'' by looking at intermediate layers of a fully pretrained model, the same seems to apply \textit{during} the training: linguistic knowledge to solve lexical tasks are learned first while more complex tasks are easier solved with representation obtained later in the learning.

\section*{Inter-Section Interlude}

We conclude that in general the rediscovery hypothesis seems valid: when 
language models reach good performance they indeed capture linguistic structures (at least in the case of the English language). 
So we obtained a negative result when we tried to reject the rediscovery hypothesis. Along with this, positive examples are abundant, including the extensive literature on probing (of which we give an overview in Section~\ref{sec:rel_work}).
This might indicate that the hypothesis is indeed true. The lack of correlation between probing scores and LM performance for one of the considered models only questions the probing methodology but does not reject the very fundamental connection between language modeling and learning linguistic structures. 
We address this in the next section, where we formalize the connection and show how it holds empirically.
The empirical experiments will consist in adversarially removing information that could serve probing accuracy while keeping good LM performance.\footnote{A more restricted scenario, where such adversarial training is performed on {\sc CoVe} is reported in Appendix~\ref{app:adversarial}.}

\section{An Information-Theoretic Framework}\label{sec:theory}
Recall that the rediscovery hypothesis asserts that neural language models, in the process of their pretraining, rediscover linguistic knowledge. We will prove this claim by contraposition, which states that without linguistic knowledge, the neural LMs cannot perform at their best. A recent paper of~\shortciteA{elazar2020bert} has already investigated how linearly removing\footnote{\textit{Removing linearly} means that a linear classifier cannot predict the required linguistic property with above majority class accuracy.} certain linguistic information from {\sc BERT}'s layers impacts its accuracy of predicting a masked token. They showed \textit{empirically} that dependency information, part-of-speech tags, and named entity labels are important for word prediction, while syntactic constituency boundaries (which mark the beginning and the end of a phrase) are not. One of the questions raised by the authors is how to \textit{quantify} the relative importance of different properties encoded in the representation for the word prediction task. The current section of our work attempts to answer this question---we provide a metric $\rho$ that is a reliable predictor of such importance. This metric occurs naturally when we take an information-theory lens and develop a theoretical framework that ties together linguistic properties, word representations, and language modeling performance.
We show that when a linguistic property is removed from word vectors, the decline in the quality of a language model depends on how strongly the removed property is interdependent with the underlying text, which is measured by $\rho$: \textit{a greater $\rho$ leads to a greater drop}. 

The proposed metric has an undeniable advantage: its calculation does not require word representations themselves or a pretrained language model. All that is needed is the text and its linguistic annotation. Thanks to our Theorem~\ref{prop:main}, we can express the influence of a linguistic property on the word prediction task in terms of the coefficient $\rho$.

\subsection{Notation}\label{sec:preliminaries}
We will use plain-faced lowercase letters ($x$) to denote scalars and plain-faced uppercase letters ($X$) for random variables. 
Bold-faced lowercase letters ($\mathbf{x}$) will denote vectors---both random and non-random---in the Euclidean space $\mathbb{R}^d$, while bold-faced uppercase letters ($\mathbf{X}$) will be used for matrices. 

Assuming there is a finite vocabulary $\mathcal{W}$, members of that vocabulary are called \textit{tokens}. A sentence $W_{1:n}$ is a sequence of tokens $W_i\in\mathcal{W}$, this is $W_{1:n}=[W_1,W_2,\ldots, W_n]$. A linguistic annotation $T$ of a sentence $W_{1:n}$ may take different forms. For example, it may be a sequence of per-token tags $T=[T_1,T_2,\ldots,T_n]$, or a parse-tree $T=(\mathcal{V},\mathcal{E})$ with vertices $\mathcal{V}=\{W_1,\ldots,W_n\}$ and edges $\mathcal{E}\subset \mathcal{V}\times \mathcal{V}$. We only require that $T$ is a \emph{deterministic} function of $W_{1:n}$.\footnote{Although in reality two people can give two different annotations of the same text due to inherent ambiguity of language or different linguistic theories, we will treat $T$ as the final---also called gold---annotation after disagreements are resolved between annotators and a common reference annotation is agreed upon.}

A (masked) language model is formulated as the probability distribution $q_{\boldsymbol{\theta}}(W_i\mid\boldsymbol\xi_i)\approx\Pr(W_i\mid C_i)$, where $C_i$ is the context of $W_i$ (see below for different types of context), and $\boldsymbol\xi_i$ is the vector representation of $C_i$. The cross-entropy loss of such a model is $\ell(W_i, \boldsymbol\xi_i):=\E_{(W_i,C_i)\sim \mathcal{D}}[-\log q_{\boldsymbol\theta}\left(W_i\mid\boldsymbol\xi_i\right)]$, where $\mathcal{D}$ is the true joint distribution of word-context pairs $(W, C)$.


For a random variable $X$, its entropy is denoted by $\h[X]$. For a pair of random variables $X$ and $Y$, their mutual information is denoted by $\I[X; Y]$. In Appendix~\ref{app:inf_theory} we provide the necessary background on information theory, and we refer the reader to refer to it if needed.

\paragraph{Discreteness of representations.} Depending on the LM, $C_i$ is usually either the left context $[W_1,\ldots,W_{i-1}]$ (for a causal LM) or a subsequence of the bidirectional context $[W_1,\ldots,W_{i-1},W_{i+1},\ldots,W_n]$ (for a masked LM). Although the possible set of all such contexts $\mathcal{C}$ is infinite, it is still \emph{countable}. Thus the set of all contextual representations $\{\boldsymbol\xi:\,\boldsymbol\xi\text{ is a vector representation of }C\mid C\in\mathcal{C}\}$ is also countable. Hence, we treat $\boldsymbol\xi$ as \emph{discrete} random vector.

\bigskip

\subsection{Main Result}
Our main result is the following

\begin{theorem}\label{prop:main}
Let 
\begin{enumerate}
\item $\mathbf{x}_i$ be a (contextualized) embedding of a token $W_i$ in a sentence $W_{1:n}$, and denote
\begin{equation}
\sigma_i:=\I[W_i;\mathbf{x}_i]/\h[W_{1:n}],\label{eq:sigma}
\end{equation}
\item $T$ be a linguistic annotation of $W_{1:n}$, and the dependence between $T$ and $W_{1:n}$ is measured by the coefficient
\begin{equation}
\rho:={\I[T;W_{1:n}]}/{\h[W_{1:n}]},\label{eq:rho}
\end{equation}
\item $\rho>1-\sigma_i$,
\item $\tilde{\mathbf{x}}_i$ be a (contextualized) embedding of $W_i$ that contains no information on $T$.
\end{enumerate}
Then the decline in the language modeling quality when using $\tilde{\mathbf{x}}_i$ instead of $\mathbf{x}_i$ is approximately supralinear in $\rho$:
\begin{equation}
\ell(W_i, \tilde{\mathbf{x}}_i)-\ell(W_i,\mathbf{x}_i)\gtrapprox \h[W_{1:n}]\cdot\rho+c\label{eq:delta_l}
\end{equation}
for $\rho>\rho_0$, with constants $\rho_0>0$, and $c$ depending on $\h[W_{1:n}]$ and $\I[W_i;\mathbf{x}_i]$.
\end{theorem}

\begin{proof}The proof is given in Appendix~\ref{app:proof}. Here we provide a less formal argument. Using visualization tricks as in \shortciteA{45511} we can illustrate the essence of the proof by Figure~\ref{fig:thm1}. 
\begin{figure}[htbp]
\centering
\includegraphics[width=.45\textwidth]{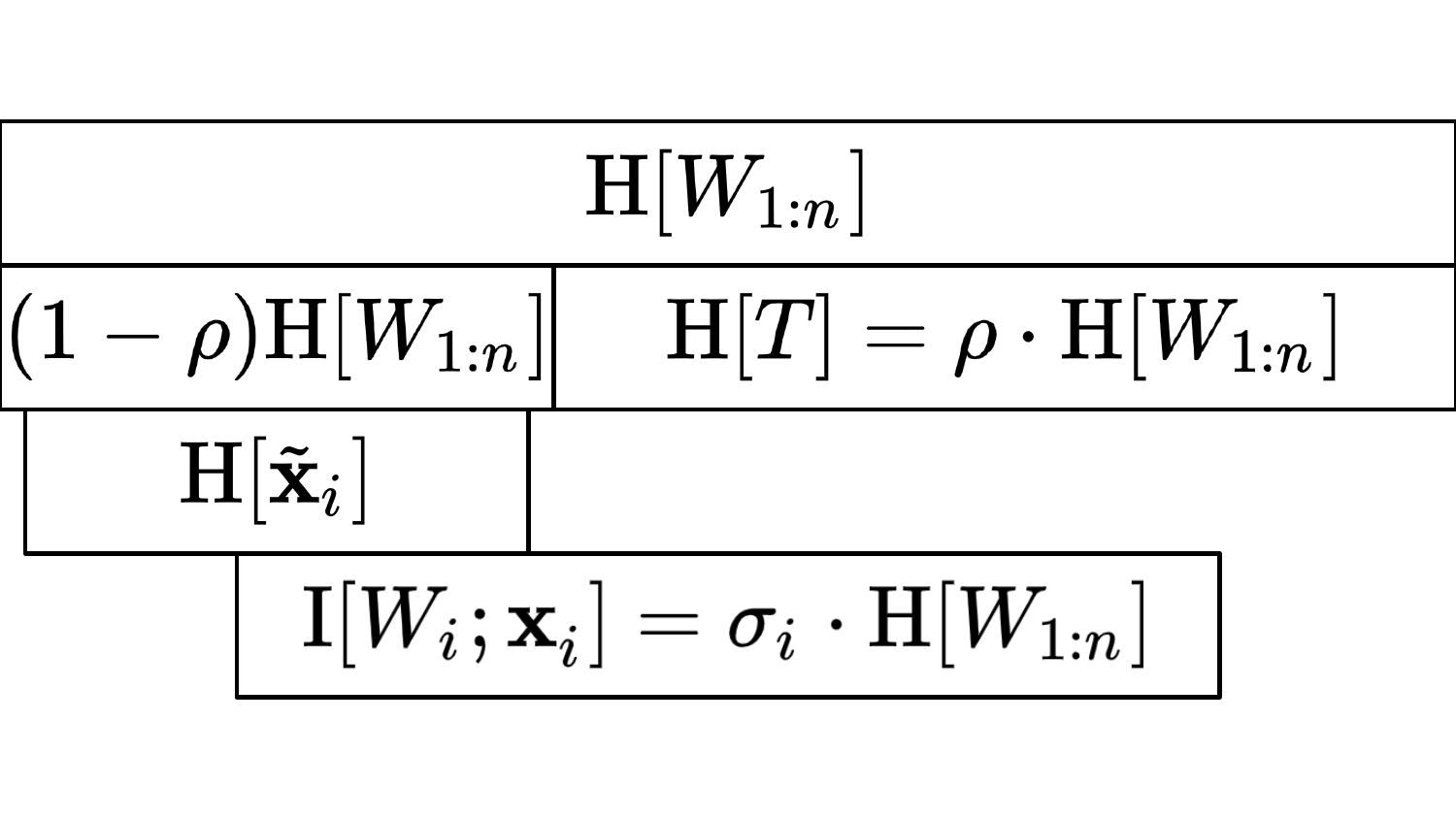}
\caption{Illustration of Theorem~\ref{prop:main}.}
\label{fig:thm1}
\end{figure}
First of all, look at the entropy $\h[X]$ as the amount of information in the variable $X$. Imagine amounts of information as bars. These bars overlap if there is shared information between the respective variables. 

The annotation $T$ and the embedding vector $\tilde{\mathbf{x}}_i$ are derived from the underlying text $W_{1:n}$, thus $W_{1:n}$ contains more information than $T$ or $\tilde{\mathbf{x}}_i$, hence $\h[W_{1:n}]$ fully covers $\h[T]$ and $\h[\tilde{\mathbf{x}}_i]$. Since the mutual information $\I[W_i;\tilde{\mathbf{x}}_i]$ cannot exceed the information in $\tilde{\mathbf{x}}_i$, we can write
\begin{equation}
\I[W_i;\tilde{\mathbf{x}}_i]\le\h[\tilde{\mathbf{x}}_i].\label{eq:x_tilde_text}
\end{equation}

Now recall the Eq.~\eqref{eq:rho}: it simply means that $\rho$ is the fraction of information that is left in $T$ after it was derived from $W_{1:n}$. This immediately implies that the information which is \emph{not} in $T$ (but was in $W_{1:n}$ initially) is equal to $(1-
\rho)\cdot\h[W_{1:n}]$.

Since the embedding $\tilde{\mathbf{x}}_i$ contains no information about the annotation $T$, $\h[\tilde{\mathbf{x}}_i]$ and $\h[T]$ do not overlap. But this means that 
\begin{equation}
\h[\tilde{\mathbf{x}}_i]\le(1-\rho)\cdot\h[W_{1:n}],\label{eq:first_ineq}
\end{equation}
because $\tilde{\mathbf{x}}_i$ is in the category of \textit{no information about} $T$. 

Combining \eqref{eq:sigma}, \eqref{eq:x_tilde_text}, \eqref{eq:first_ineq}, and the assumption $\rho>1-\sigma_i$, we have
\begin{align}
&\I[W_{i};\tilde{\mathbf{x}}_i]\le(1-\rho)\cdot\h[W_{1:n}]<\sigma_i\cdot\h[W_{1:n}]=\I[W_i;\mathbf{x}_i],\notag\\
&\Rightarrow\quad\I[W_i;\mathbf{x}_i]-\I[W_i;\tilde{\mathbf{x}}_i]>(\rho+\sigma_i-1)\cdot\h[W_{1:n}].\label{eq:almost}
\end{align}
The inequality \eqref{eq:almost} is almost the required \eqref{eq:delta_l}---it remains to show that the change in mutual information can be approximated by the change in LM loss; this is done in Lemma~\ref{lem:delta_i_l}.
\end{proof}

\paragraph{Role of $\rho$ and $\sigma_i$.} 
Equation~\eqref{eq:rho} quantifies the dependence between $T$ and $W_{1:n}$, and it simply means that the annotation $T$ carries $100\cdot\rho\%$ of information contained in the underlying text $W_{1:n}$ (in the information-theoretic sense). 
The quantity $\rho$ is well known as the \textit{entropy coefficient} in the information theory literature \shortcite{press2007conditional}. It can be thought of as an analog of the correlation coefficient for measuring not only linear or monotonic dependence between numerical variables but any kind of statistical dependence between any kind of variables (numerical and non-numerical). As we see from Equation~\eqref{eq:delta_l}, the coefficient $\rho$ plays a key role in predicting the LM degradation when the linguistic structure $T$ is removed from the embedding $\mathbf{x}_i$. In Section~\ref{sec:experiments} we give a practical way of its estimation for the case when $T$ is a per-token annotation of $W_{1:n}$.

Similarly, Equation~\eqref{eq:sigma} means that both $W_i$ and $\mathbf{x}_i$ carry at least $100\cdot\sigma_i\%$ of information contained in $W_{1:n}$. By Firth's distributional hypothesis~\cite{Firth57},\footnote{``you shall know a word by the company it keeps''} we assume that $\sigma_i$ significantly exceeds zero. 

\paragraph{Range of $\rho$.} 
In general, mutual information is non-negative. Mutual information of the annotation $T$ and the underlying text $W_{1:n}$ cannot exceed information contained in either of these variables, i.e.\ $0\le\I[T;W_{1:n}]\le\h[W_{1:n}]$, and therefore $\rho\in[0,1]$.

\paragraph{Absence of information.} When we write ``$\tilde{\mathbf{x}}_i$ contains no information on $T$'', this means that the mutual information between $\tilde{\mathbf{x}}_i$ and $T$ is zero: 
\begin{equation}
\I[T;\tilde{\mathbf{x}}_i]=0.\label{eq:mi_zero} 
\end{equation}
In the language of \shortciteA{DBLP:conf/emnlp/PimentelSWC20}, Equation~\eqref{eq:mi_zero} assumes that all probes---even the best---perform poorly in extracting $T$ from $\tilde{\mathbf{x}}_i$. This essentially means that the information on $T$ has been filtered out of $\tilde{\mathbf{x}}$. In practice, we will approximate this with the techniques of Gradient Reversal Layer \shortcite{ganin2015unsupervised} and Iterative Nullspace Projection \shortcite{ravfogel-etal-2020-null}.


\subsection{Experiments}\label{sec:experiments}

In this section, we empirically verify the prediction of our theory---the stronger the dependence of a linguistic property with the underlying text the greater the decline in the performance of a language model that does not have access to such property.

Here we will focus on only one contextualized embedding model, {\sc BERT} because it is the most mainstream model. Along with this, we will keep the {\sc SGNS} for consideration as a model of static embeddings.\footnote{Recall that {\sc SGNS} stands for skip-gram with negative sampling. {\sc Skip-gram} \shortcite{mikolov2013efficient} is a masked language model with all tokens but one in a sequence being masked. {\sc SGNS} approximates its cross-entropy loss by negative sampling procedure \shortcite{DBLP:conf/aistats/BengioS03}.}

\subsubsection{Removal Techniques}\label{sec:removals}

We consider two ways of removing linguistic information from the embeddings: Gradient Reversal Layer (GRL) and Iterative Nullspace Projection (INLP).

\paragraph{GRL \shortcite{ganin2015unsupervised}} is a method of adversarial training that, in our case, can be used to remove linguistic information $T$ from word embeddings $\boldsymbol\xi$. For this, the pretraining model is formulated as $\widehat{W}=\omega(\boldsymbol\xi)$, and an auxiliary model is $\widehat{T}=\tau(\boldsymbol\xi)$. The training procedure optimizes
$$
\min_{\omega, \tau, \boldsymbol\xi}\left[\ell(\omega(\boldsymbol\xi),W)+\ell(\tau(\gamma_\lambda(\boldsymbol\xi)),T)\right]\label{eq:train_main}
$$ 
where $\ell(\cdot,\cdot)$ is the loss function, and $\gamma_\lambda$ is a layer inserted between $\boldsymbol\xi$ and $\tau$ which acts as the identity during the forward pass, while it scales the gradients passed through it by $-\lambda$ during backpropagation. In theory, the resulting embeddings $\tilde{\mathbf{x}}$ are maximally informative for the pretraining task while at the same time minimally informative for the auxiliary task. However, in practice, GRL not always succeeds to fully remove the auxiliary information from the embeddings as was shown by \shortciteA{elazar2018adversarial} (and confirmed by our experiments in Appendix \ref{app:grl_finetune}).

\paragraph{INLP \shortcite{ravfogel-etal-2020-null}} is a method of post-hoc removal of some property $T$ from the pretrained embeddings ${\mathbf{x}}$. INLP neutralizes the ability to linearly
predict $T$ from $\mathbf{x}$ (here $T$ is a single tag, and $\mathbf{x}$ is a single vector). It does so by training a sequence of auxiliary models $\tau_1,\ldots,\tau_k$ that predict $T$ from $\mathbf{x}$, interpreting each one as conveying information on unique directions in the latent space that correspond to $T$, and iteratively removing each of these directions. In the $i^\text{th}$ iteration, $\tau_i$ is a \textit{linear} model\footnote{\shortciteA{ravfogel-etal-2020-null} use the Linear SVM \shortcite{cortes1995support}, and we follow their setup.} parameterized by a matrix $\mathbf{U}_i$ and trained to predict $T$ from $\mathbf{x}$. When the embeddings are projected onto $\mathrm{null}(\textbf{U}_{i})$ by a projection matrix $\mathbf{P}_{\mathrm{null}(\mathbf{U}_{i})}$, we have 
$$
\mathbf{U}_i\mathbf{P}_{\mathrm{null}(\mathbf{U}_{i})}\mathbf{x}=\mathbf{0},
$$
i.e.\ $\tau_i$ will be unable to predict $T$ from 
$\mathbf{P}_{\mathrm{null}(\mathbf{U}_{i})}\mathbf{x}$. Figure~\ref{fig:inlp} illustrates the method for the case when the property $T$ has only two types of tags and $\mathbf{x}\in\mathbb{R}^2$.
\begin{figure}
\centering
\includegraphics[width=.5\textwidth]{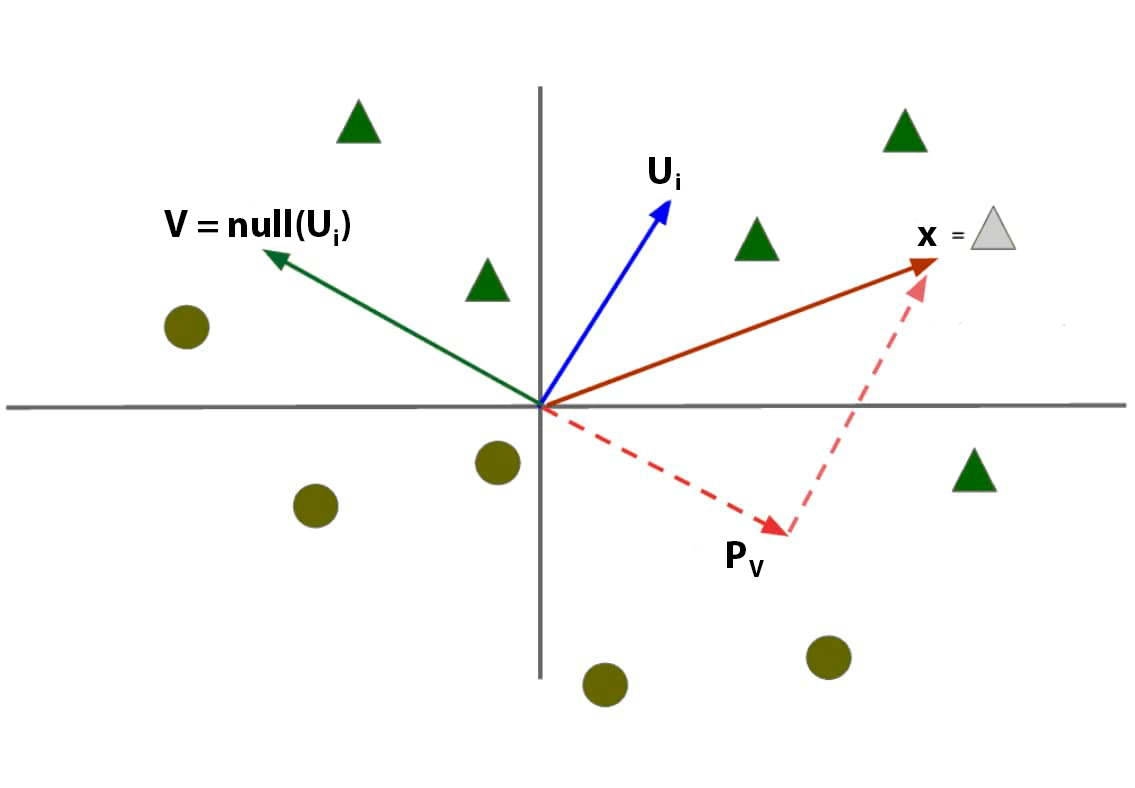}
\caption{Nullspace projection for a 2-dimensional binary classifier. The decision boundary of $\mathbf{U}_i$ is $\mathbf{U}_i$'s null-space. Source: \shortciteA{ravfogel-etal-2020-null}.}
\label{fig:inlp}
\end{figure}
The number of iterations $k$ is taken such that no linear classifier achieves above-majority accuracy when predicting $T$ from $\tilde{\mathbf{x}}=\mathbf{P}_{\mathrm{null}(\mathbf{U}_{k})}\mathbf{P}_{\mathrm{null}(\mathbf{U}_{k-1})}\ldots\mathbf{P}_{\mathrm{null}(\mathbf{U}_{1})}\mathbf{x}$. Before measuring the LM loss we allow the model to adapt to the modified embeddings $\tilde{\mathbf{x}}$ by fine-tuning its softmax layer that predicts $W$ from $\tilde{\mathbf{x}}$, while keeping the rest of the model (encoding layers) frozen. This fine-tuning does not bring back information on $T$ as the softmax layer linearly separates classes.

\subsubsection{Tasks}\label{sec:tasks}
We perform experiments on two real tasks and a series of synthetic tasks. In real tasks, we cannot arbitrarily change $\rho$---we can only measure it for some given annotations. Although we have several real annotations that give some variation in the values of $\rho$, we want more control over this metric. Therefore, we come up with synthetic annotations that allow us to smoothly change $\rho$ in a wider range and track the impact on the loss function of a language model.

To remove a property $T$ from embeddings, INLP requires a small annotated dataset, while GRL needs the \textit{whole} pretraining corpus to be annotated.\footnote{One may argue that GRL can be applied to a pretrained LM in a fine-tuning mode. Our preliminary experiments show that in this case, the linear probe's accuracy remains way above the majority class accuracy. See Appendix~\ref{app:grl_finetune} for details.} Therefore for INLP we use gold annotations, while for GRL we tag the pretraining data with the help of the {\sc Stanza} tagger \shortcite{qi2020stanza}.

\paragraph{Real tasks.} We consider two real tasks with per-token annotations: part-of-speech tagging (POS) and named entity labeling (NER). The choice of per-token annotations is guided by the suggested method of estimating $\rho$ (Section~\ref{sec:exp_setup}).

\textbf{POS} is the syntactic task of assigning tags such as noun, verb, adjective,
etc. to individual tokens. We consider POS tagged datasets that are annotated with different tagsets:
\begin{itemize}
\item Universal Dependencies English Web Treebank \shortcite<UD EWT,>{silveira14gold}, which uses two annotation schemas: UPOS corresponding to 17 universal tags across languages, and FPOS which encodes additional lexical and grammatical properties. 
\item English part of the OntoNotes corpus \shortcite{weischedel2013ontonotes} based on the Penn Treebank annotation schema \shortcite{DBLP:journals/coling/MarcusSM94}.
\end{itemize} 

\textbf{NER} is the task of predicting the category of an entity referred to by a given token,
e.g.\ does the entity refer to a person, a location, an organization, etc. This task is taken from the English part of the OntoNotes corpus.

Note that the {\sc Stanza} tagger, used to generate annotations for GRL training, relies on universal tagset from Universal Dependencies (UPOS) for POS tagging, and on OntoNotes NER annotations.
Table~\ref{tab:tagsets} reports some statistics on the size of different tagsets. 
\begin{table}[htbp]
\begin{center}
\begin{tabular}{c c c c}
\toprule
UD UPOS & UD FPOS & ON POS & ON NER \\
\midrule
17 & 50 & 48 & 66\\
\bottomrule
\end{tabular}
\caption{Size of the tagsets for Universal Dependencies and OntoNotes PoS tagging and NER datasets. }
\label{tab:tagsets} 
\end{center}
\end{table}

\paragraph{Synthetic tasks} are created as follows. Let $T^{(0)}$ be an annotation of a corpus $W_{1:n}$, which has $m$ unique tags, and let the corresponding $\rho^{(0)}:=\I[T^{(0)};W_{1:n}]/\h[W_{1:n}]$. We select the two least frequent tags from the tagset, and conflate them into one tag. This gives us an annotation $T^{(1)}$ which contains less information about $W_{1:n}$ than the annotation $T^{(0)}$, and thus has $\rho^{(1)}<\rho^{(0)}$. In Table~\ref{tab:example} we give an example of such conflation for a POS-tagged sentence from the OntoNotes corpus \shortcite{weischedel2013ontonotes}.

\begin{table}[htbp]
\centering
\begin{tabular}{l | c c c c c c c c c}
\toprule
$W_{1:9}$ & When & a & browser & starts & to & edge & near & to & consuming \\
$T^{(0)}$ & WRB & DT & NN & VBZ & TO & VB & RB & IN & VBG \\
$T^{(1)}$ & X & DT & NN & VBZ & X & VB & RB & IN & VBG\\
\midrule
$W_{10:18}$ & 500 & MB & of & RAM & on & a & regular & basis & , \\
$T^{(0)}$ & CD & NNS& IN & NN & IN& DT & JJ & NN & ,\\
$T^{(1)}$ & CD & NNS& IN & NN & IN& DT & JJ & NN & ,\\
\midrule
$W_{19:22}$ & something & is & wrong & .\\
$T^{(0)}$ & NN & VBZ & JJ & .\\
$T^{(1)}$ & NN & VBZ & JJ & .\\
\bottomrule
\end{tabular}
\caption{Example of conflating two least frequent tags (WRB and TO) into one tag (X).}
\label{tab:example}
\end{table}

Next, we select two least frequent tags from the annotation $T^{(1)}$ and conflate them. This will give an annotation $T^{(2)}$ with $\rho^{(2)}<\rho^{(1)}$. Iterating this process $m-1$ times we will end up with the annotation $T^{(m-1)}$ that tags all tokens with a single (most frequent) tag. In this last iteration, the annotation has no mutual information with $W_{1:n}$, i.e.\ $\rho^{(m-1)}=0$.



\subsubsection{Experimental Setup}\label{sec:exp_setup}

We remove (pseudo-)linguistic structures (Section~\ref{sec:tasks}) from {\sc BERT} and {\sc SGNS} embeddings using the methods from Section~\ref{sec:removals},\footnote{INLP is applied to the last layers of {\sc BERT} and {\sc SGNS}.} and measure the decline in the language modeling performance. Assuming that $\Delta\ell_{\omega,T,\mu}$ is the validation loss increase of the embedding model $\omega\in\{\textsc{BERT},\textsc{SGNS}\}$ when information on $T\in\{\text{Synthetic},\text{POS},\text{NER}\}$ is removed from $\omega$ using the removal method $\mu\in\{\text{GRL},\text{INLP}\}$, we compare $|\Delta\ell_{\omega,T,\mu}|$ against $\rho$ defined by \eqref{eq:rho} which is the strength of interdependence between the underlying text $W_{1:n}$ and its annotation $T$. By Theorem~\ref{prop:main}, $\Delta\ell_{\omega,T,\mu}$ is $\Omega(\rho)$\footnote{as a reminder, this means that there exist constants $c_0$, $c_1$, $\rho_0$ s.t. $\Delta\ell_{\omega,T,\mu}\ge c_1\cdot\rho+c_0$ for $\rho\ge\rho_0$.} for any combination of $\omega$, $T$, $\mu$.

\paragraph{Estimating $\rho$.} Recall that $\rho:=\I[T; W_{1:n}]/\h[W_{1:n}]$ (Eq.~\ref{eq:rho}) and that the annotation $T$ is a deterministic function of the underlying text $W_{1:n}$ (Sec.~\ref{sec:preliminaries}). 
In this case, we can write\footnote{In general, for two random variables $X$ and $Y$, $Y=f(X)$ if and only if $\h[Y\mid X]=0$.}
\begin{equation}
\rho=\frac{\h[T]-\overbrace{\h[T\mid W_{1:n}]}^0}{\h[W_{1:n}]}=\frac{\h[T]}{\h[W_{1:n}]}. \label{eq:rho_simple} 
\end{equation}
and when $T$ is a per-token annotation of $W_{1:n}$, i.e.\ $T=T_{1:n}$ (which is the case for the annotations that we consider), this becomes $\rho=\h[T_{1:n}]/\h[W_{1:n}]$. Thus to estimate $\rho$, we simply need to be able to estimate the latter two entropies. This can be done by training an autoregressive sequence model, such as LSTM, on $W_{1:n}$ and on $T_{1:n}$. The loss function of such a model---the cross-entropy loss---serves as an estimate of the required entropy. Notice, that we cannot use masked LMs for this estimation as they do not give a proper factorization of the probability $p(w_1,\ldots,w_n)$. 
Thus, we decided to choose the {\sc AWD-LSTM-MoS} model of \shortciteA{yang2017breaking} which is a compact and competitive LM\footnote{\url{https://paperswithcode.com/sota/language-modelling-on-wikitext-2}} that can be trained in a reasonable time and with moderate computing resources. In addition, we also estimated the entropies through a vanilla LSTM with tied input and output embeddings \shortcite{DBLP:conf/iclr/InanKS17}, and a 
Kneser-Ney 3-gram model\footnote{We used the SRILM toolkit \shortcite{DBLP:conf/interspeech/Stolcke02}. We used Witten-Bell discounting for annotations because modified Knesser-Ney discounting does not apply to them. This is a known issue with Knesser-Ney when the vocabulary size (number of unique tags) is small. See item C3 at \url{http://www.cs.cmu.edu/afs/cs/project/cmt-55/lti/Courses/731/homework/srilm/man/html/srilm-faq.7.html}.} \shortcite{DBLP:journals/csl/NeyEK94} to test how strongly our method depends on the underlying sequence model.

\paragraph{Limitations.} The suggested method of estimating $\rho$ through autoregressive sequence models is limited to per-token annotations only. However, according to formula~\eqref{eq:rho_simple}, to estimate $\rho$ for deeper annotations, it is sufficient to be able to estimate the entropy $\h[T]$ of such deeper linguistic structures $T$. For example, to estimate the entropy of a parse tree, one can use the cross-entropy produced by a probabilistic parser. The only limitation is the determinism of the annotation process.


\paragraph{Amount of information to remove.}

Each of the removal methods has a hyperparameter that controls how much of the linguistic information $T$ is removed from the word vectors $\mathbf{x}$---the number of iterations in INLP and $\lambda$ in GRL. Following \shortciteA{elazar2020bert} we keep iterating the INLP procedure or increasing $\lambda$ in GRL until the performance of a linear probe that predicts $T$ from the filtered embeddings $\tilde{\mathbf{x}}$ drops to the majority-class accuracy. When this happens we treat the resulting filtered embeddings $\tilde{\mathbf{x}}$ as containing no information on $T$. 

\paragraph{Optimization details.} For the INLP experiments we use pretrained {\sc BERT-Base} from HuggingFace \shortcite{DBLP:conf/emnlp/WolfDSCDMCRLFDS20}, and an {\sc SGNS} pretrained in-house \shortcite{sgns}. For the GRL experiments we pretrain LMs ourselves, details are in Appendix~\ref{app:optimization}.

\subsubsection{Results}\label{sec:results}

\paragraph{Real tasks.} Table~\ref{tab:pos_ner_results} reports the loss drop of pretrained LM when removing linguistic information (POS or NER) from the pretrained model. Plots that illustrate how losses and probing accuracies change with INLP iterations are provided in Appendix~\ref{app:inlp_dynamics}.

First, we compare UPOS tagsets versus FPOS tagset: intuitively FPOS should have a tighter link with underlying text, and therefore result in higher $\rho$ and as a consequence in a higher drop in loss after the removal of this information from words representations. This is confirmed by the numbers reported in Table~\ref{tab:pos_ner_results}. 

We also see a greater LM performance drop when the POS information is removed from the models compared to NE information removal. This is in line with Theorem~\ref{prop:main} as POS tags depend stronger on the underlying text than NE labels as measured by $\rho$. 

In the case of pretraining {\sc SGNS} with GRL, we could not reach the majority class accuracy.\footnote{This is because GRL does not always fully remove all the auxiliary task information from the main model \shortcite{elazar2018adversarial}.} Reported is the loss increase for the model that had the lowest accuracy 
($\text{Acc}=.466$ while the majority accuracy for this task is $.126$). If it were possible to train the SGNS model with GRL removing POS tags and reaching the majority class accuracy, its loss increase would be even higher than the one reported.

\begin{table}[htbp]
\centering
\begin{tabular}{l | c c c c | c c}
\toprule
Removal method $\mu$ & \multicolumn{4}{c |}{INLP} & \multicolumn{2}{c}{GRL}\\
Annotation $T$ & ON NER & UD UPOS & UD FPOS & ON POS & NER & UPOS \\
\midrule
$\rho$ (AWD-LSTM-MoS) & 0.18 & 0.32 & 0.36 & 0.42 & 0.18 & 0.43 \\
$\rho$ (LSTM) & 0.18 & 0.32 & 0.37 & 0.42 & 0.22 & 0.34 \\
$\rho$ (KN-3) & 0.18 & 0.36 & 0.41 & 0.42 & 0.17 & 0.37 \\
\midrule
$\Delta\ell_{\textsc{BERT},T,\mu}$ & 0.13 & 0.54 & 0.70 & 0.87 & 6.16 & 6.16 \\
$\Delta\ell_{\textsc{SGNS},T,\mu}$ & 1.33 & 1.62 & 1.79 & 2.04 & 0.32 & $>$1.1$^\dagger$\\ \bottomrule
\end{tabular}
\caption{Results on POS tagging and NER tasks. ON stands for OntoNotes, UD for Universal Dependencies. The UD EWT dataset has two types of POS annotation: coarse tags (UPOS) and fine-grained tags (FPOS). $^\dagger$GRL for SGNS on UPOS annotation did not reach the majority class accuracy (.126). KN-3 is a Kneser-Ney 3-gram model.}
\label{tab:pos_ner_results}
\end{table}


We note larger $\Delta\ell_{\text{\sc BERT},T,\text{GRL}}$ compared to $\Delta\ell_{\text{\sc BERT},T,\text{INLP}}$. We attribute it to the following: INLP works with already pretrained \textit{optimal} $\omega$ and we fine-tune its softmax layer for modified embeddings $\tilde{\mathbf{x}}$, while GRL with a large $\lambda$ (that is needed to bring the probing accuracy down to a majority class) moves $\omega$'s weights into a bad local minimum when pretraining from scratch. Moreover, the pretrained {\sc BERT} from HuggingFace and {\sc RoBERTa} pretrained by us are pretrained over different corpora and are using different vocabularies. Thus increases in the cross-entropies caused by INLP and by GRL are not directly comparable. 

We attempted to conduct experiments on GRL in such a way that their results were comparable to the results of the INLP---we tried to fine-tune a pretrained RoBERTa with GRL. However, our attempt was not successful as we did not manage to reach majority class accuracy (more details in Appendix \ref{app:grl_finetune}). This reinforces \citeauthor{elazar2018adversarial}'s \citeyear{elazar2018adversarial} argument that GRL has more difficulties removing fully the auxiliary information from the word representations. 

Finally, we see that although $\rho$ indeed depends on the underlying sequence model that is used to estimate the entropies $\h[T_{1:n}]$ and $\h[W_{1:n}]$, all models---AWD-LSTM-MoS, LSTM, and KN-3---preserve the relative order for the annotations that we consider. E.g.,\ all models indicate that OntoNotes NER annotation is the least interdependent with the underlying text, while OntoNotes POS annotation is the most interdependent one. In addition, it turns out that for a quick estimate of $\rho$, one can use the KN-3 model, which on a modern laptop calculates the entropy of texts of 100 million tokens in a few minutes, in contrast to the LSTM, which takes several hours on a modern GPU.

\bigskip

\paragraph{Synthetic tasks.} 
To obtain synthetic data, we apply the procedure described in Sect.~\ref{sec:tasks} to the OntoNotes POS annotation as it has the highest $\rho$ in Table~\ref{tab:pos_ner_results} and thus allows us to vary the metric in a wider range.

The results of evaluation on the synthetic tasks through the INLP\footnote{We did not perform GRL experiments on synthetic tasks as they are computationally expensive--- finding $\lambda$ for each annotation requires multiple pretraining runs.} are provided in Figure~\ref{fig:syn_results}. 
\begin{figure}
\centering
\includegraphics[width=.45\textwidth]{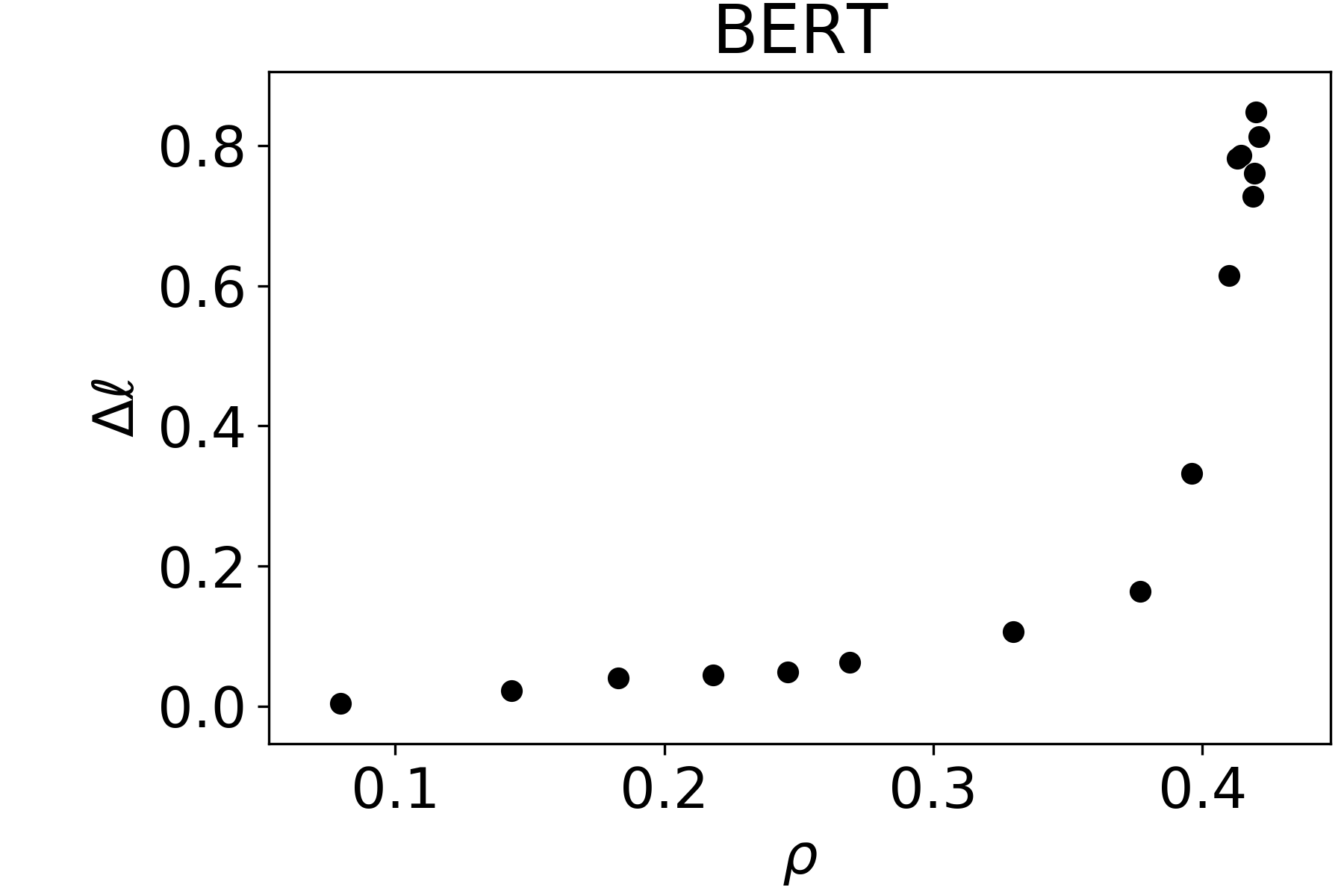}
\caption{Synthetic task results for INLP. $\Delta\ell$ is the increase in cross-entropy loss when pseudo-linguistic information is removed from the {\sc BERT}'s last layer with the INLP procedure. $\rho$ is estimated with the help of {AWD-LSTM-MoS} model \shortcite{yang2017breaking} as described in Section~\ref{sec:exp_setup}.}
\label{fig:syn_results}
\end{figure}
They validate the predictions of our theory---for the annotations with greater $\rho$ there is a bigger drop in the LM performance (i.e.\ increase in the LM loss) when the information on such annotations is removed from the embeddings. We notice that $\Delta\ell$ is piecewise-linear in $\rho$ with the slope changing at $\rho\approx0.4$. We attribute this change to the following: for $\rho<0.4$, the majority class (i.e.\ the most frequent tag) is the tag that encapsulates several conflated tags (see Subsection~\ref{sec:tasks} for details), while for $\rho>0.4$, the majority is NN tag. This switch causes a significant drop in the majority class accuracy which in turn causes a significant increase in the number of INLP iterations to reach that accuracy, and hence an increase in the amount of information being removed which implies greater degradation of the LM performance.

\section{Related Work}\label{sec:rel_work}

\paragraph{Theoretical analysis.} Since the success of early word embedding algorithms like {\sc SGNS} and {\sc GloVe}, there were attempts to analyze theoretically the connection between their pretraining objectives and performance on downstream tasks such as word similarity and word analogy tasks. An incomplete list of such attempts includes those of \shortciteA{DBLP:conf/nips/LevyG14,DBLP:journals/tacl/AroraLLMR16,DBLP:journals/tacl/HashimotoAJ16,DBLP:conf/acl/GittensAM17,DBLP:journals/ml/TianOI17,DBLP:conf/acl/EthayarajhDH19a,DBLP:conf/icml/AllenH19}. 
Most of these works represent pretraining as a low-rank approximation of some co-occurrence matrix---such as PMI---and then use an empirical fact that the set of columns (or rows) of such a matrix is already a good solution to the analogy and similarity tasks. Recently, we have seen a growing number of works devoted to the theoretical analysis of contextualized embeddings. \shortciteA{DBLP:conf/iclr/KongdYLDY20} showed that modern embedding models, as well as the old warrior {\sc SGNS}, maximize an objective function that is a lower bound on the mutual information between different parts of the text. \shortciteA{DBLP:journals/corr/abs-2008-01064} formalized how solving
certain pretraining tasks allows learning representations that provably decrease the sample complexity of downstream supervised tasks. 
Of particular interest is a recent paper by \shortciteA{saunshi2021a} that relates a pretraining performance of an autoregressive LM with a downstream performance for downstream tasks that \textit{can} be reformulated as next word prediction tasks. The authors showed that for such tasks, if the pretraining objective is $\epsilon$-optimal,\footnote{\shortciteA{saunshi2021a} say that the pre-training loss $\ell$ is \emph{$\epsilon$-optimal} if $\ell-\ell^\ast\le\epsilon$, where $\ell^\ast$ is the minimum achievable loss.} then the downstream objective of a linear classifier is $\mathcal{O}(\sqrt{\epsilon})$-optimal. In the second part of our work (Section~\ref{sec:theory}) we prove a similar statement, but the difference is that we study how the removal of linguistic information affects the pretraining objective and our approach is not limited to downstream tasks that can be reformulated as next word prediction.

\paragraph{Probing.} Early work on probing tried to analyze LSTM language models \shortcite{linzen2016assessing,shi-etal-2016-string,adi2017fine,conneau2018you,gulordava2018colorless,kuncoro2018lstms,tenney2018what}. Moreover, word similarity \shortcite{finkelstein2002placing} and word analogy \shortcite{mikolov2013linguistic} tasks can be regarded as non-parametric probes of static embeddings such as {\sc SGNS} \shortcite{mikolov2013distributed} and {\sc GloVe} \shortcite{pennington2014glove}. Recently the probing approach has been used mainly for the analysis of contextualized word embeddings. \shortciteA{hewitt2019structural} for example
showed that entire parse trees can be linearly extracted from {\sc ELMo}'s \shortcite{peters2018deep} and {\sc BERT}'s \shortcite{devlin2019bert} hidden layers. \shortciteA{tenney2018what} probed contextualized embeddings for various linguistic phenomena and showed that, in general, contextualized embeddings improve over their non-contextualized counterparts largely on syntactic tasks (e.g.,\ constituent labeling) in comparison to semantic tasks (e.g.,\ coreference). The probing methodology has also shown that {\sc BERT} learns some reflections of semantics \shortcite{reif2019visualizing} and factual knowledge \shortcite{petroni2019language} into the linguistic form which are useful in applications such as word sense disambiguation and question answering respectively. \shortciteA{zhang2020billions} analyzed how the quality of representations in a pretrained model evolves with the amount of pretraining data. They performed extensive probing experiments on various NLU tasks and found that pretraining with 10M sentences was already able to solve most of the syntactic tasks, while it required 1B training sentences to be able to solve tasks requiring semantic knowledge (such as Named Entity Labeling, Semantic Role Labeling, and some others as defined by \shortciteA{tenney2018what}). 

Closest to our second part (Section~\ref{sec:theory}), \shortciteA{elazar2020bert} propose to look at probing from a different angle, proposing \textit{amnesic probing} which is defined as the drop in performance of a pretrained LM after the relevant linguistic information is removed from one of its layers. The notion of amnesic probing fully relies on the assumption that the amount of the linguistic information contained in the pretrained vectors should correlate with the drop in LM performance after this information is removed. In this work (Section \ref{sec:theory}) we theoretically prove this assumption. While \shortciteA{elazar2020bert} measured LM performance as word prediction accuracy, we focus on the native LM cross-entropy loss. In addition, we answer one of the questions raised by the authors on how to measure the influence of different linguistic properties on the word prediction task---we provide an easy-to-estimate metric that does exactly this.

\paragraph{Criticism of probing.} The probing approach has been criticized from different angles. Our attempt to systematize this line of work is given in Figure~\ref{fig:probe_critique}. 
\begin{figure}[htbp]
\centering
\includegraphics[width=\textwidth]{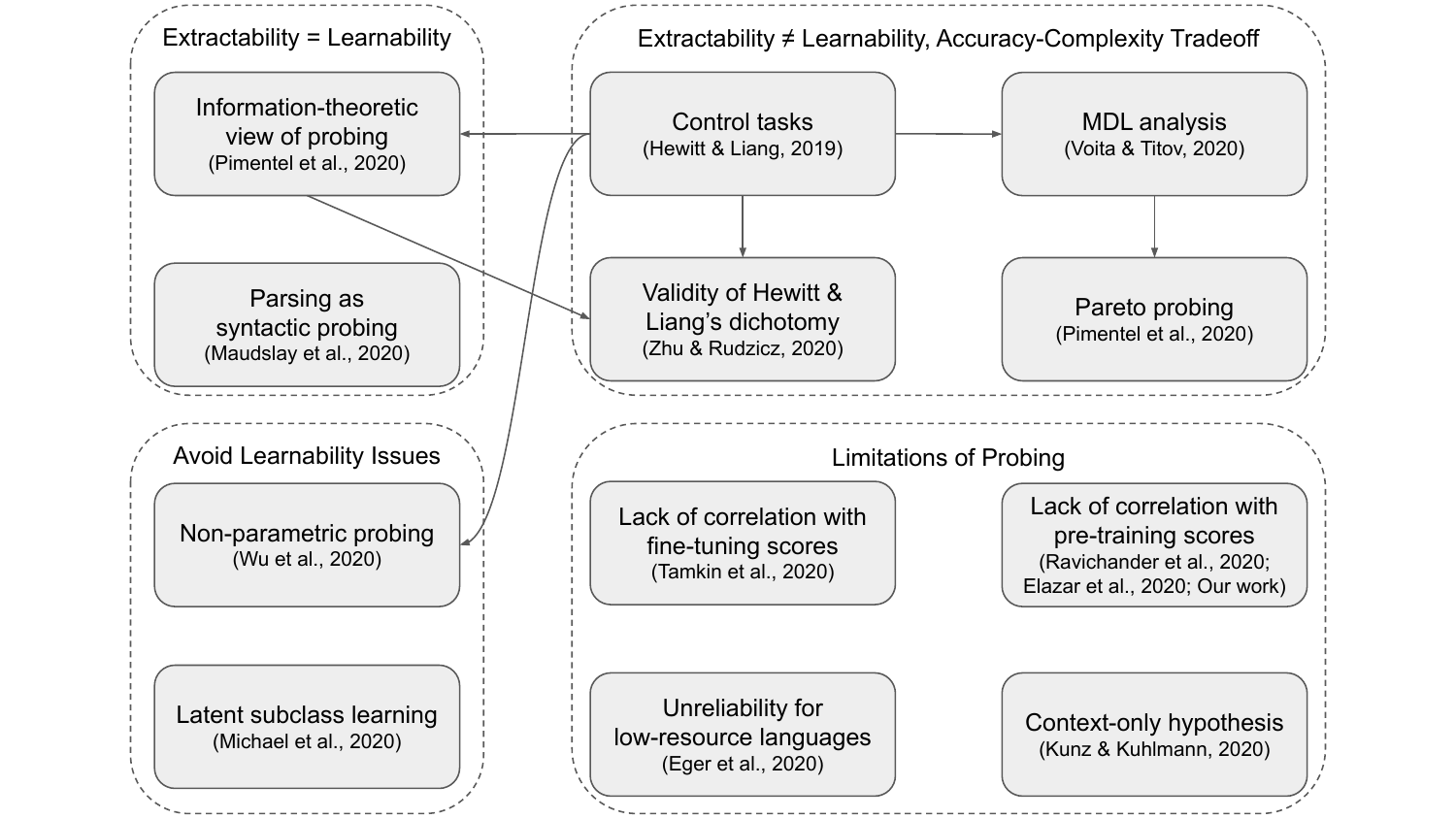}
\caption{Criticism and improvement of the probing methodology. An arrow $A\to B$ means that $B$ criticizes and/or improves $A$.}
\label{fig:probe_critique}
\end{figure}
The seminal paper of \shortciteA{hewitt2019designing} raises the issue of separation between \textit{extracting} linguistic structures from contextualized embeddings and \textit{learning} such structures by the probes themselves. This dichotomy was challenged by \shortciteA{pimentel2020information,maudslay2020}, but later validated by \shortciteA{DBLP:conf/emnlp/ZhuR20} using an information-theoretic view on probing. Meanwhile, methods were proposed that take into account not only the probing performance but also the ease of extracting linguistic information \shortcite{voita2020information} or the complexity of the probing model \shortcite{DBLP:conf/emnlp/PimentelSWC20}. At the same time, \shortciteA{DBLP:conf/acl/WuCKL20} and \shortciteA{DBLP:conf/emnlp/MichaelBT20} suggested avoiding learnability issues by non-parametric probing\footnote{Parametric probes transform embeddings to linguistic structures using \emph{parameterized} operations on vectors (such as feedforward layers). Non-parametric probes transform embeddings using \emph{non-parameterized} operations on vectors (such as vector addition/subtraction, inner product, Euclidean distance, etc.). The approach of \shortciteA{DBLP:conf/acl/WuCKL20} builds a so-called impact matrix and then feeds it into a graph-based algorithm to induce a dependency tree, all done without learning any parameters.} and weak supervision respectively. The remainder of the criticism is directed at the limitations of probing such as insufficient reliability for low-resourced languages \shortcite{eger-etal-2020-probe}, lack of evidence that probes indeed extract linguistic structures but do not learn from the linear context only \shortcite{DBLP:conf/coling/KunzK20}, lack of correlation with fine-tuning scores \shortcite{DBLP:conf/emnlp/TamkinSGG20} and with pretraining scores \shortcite{ravichander2020probing,elazar2020bert}. The first part of our work (Section~\ref{sec:lth_probing}) partly falls into this latter group, as we did not find any evidence for a correlation between probing scores and pretraining objectives for better performing {\sc CoVe} \shortcite{mccann2017learned}.

\paragraph{Pruning language models.} A recent work by \shortciteA{gordon-etal-2020-compressing} compressed {\sc BERT} using conventional pruning and showed a linear correlation between pretraining loss and downstream task accuracy. \shortciteA{DBLP:conf/nips/ChenFC0ZWC20} pruned pretrained {\sc BERT} with LTH and fine-tuned it to downstream tasks, while \shortciteA{DBLP:conf/emnlp/PrasannaRR20} pruned fine-tuned {\sc BERT} with LTH and then re-fine-tuned it. 
\shortciteA{sanh2020movement} showed that the weights needed for specific tasks are a small subset of the weights needed for masked language modeling, but they prune \textit{during fine-tuning} which is beyond the scope of our work.
\shortciteA{zhao-etal-2020-masking} propose to learn the masks of the pretrained LM as an alternative to finetuning on downstream tasks and shows that it is possible to find a subnetwork of large pretrained model which can reach the performance on the downstream tasks comparable to finetuning on this task. 
Generally speaking, the findings of the above-mentioned papers are aligned with our findings that the performance of pruned models on downstream tasks is correlated with the pretraining loss. 
The one difference from Section~\ref{sec:lth_probing} of our work is that most of the previous work looks at the performance of \textit{fine-tuned} pruned models. In our work, we \textit{probe} pruned models, i.e.\ the remaining weights of language models are \textit{not} adjusted to the downstream probing task. It is not obvious whether the conclusions from the former should carry over to the latter.

\section{Conclusion}
In this work, we tried to better understand the phenomenon of the \textit{rediscovery hypothesis} in pretrained language models. Our main contribution is two-fold: we demonstrate that the \textit{rediscovery hypothesis} \begin{itemize}
\item holds across various pretrained LMs (SGNS, CoVE, RoBERTa) even when a significant amount of weights get pruned (in English);
\item can be formally defined within an information-theoretic framework and proved (assuming that the linguistic annotation is a deterministic function of the underlying text and that the annotation is sufficiently interdependent with the text).
\end{itemize}

First (Section~\ref{sec:lth_probing}), we performed probing of different pruned\footnote{Pruning was done through iterative magnitude pruning (aka Lottery Ticket Hypothesis) or 1-shot magnitude pruning} instances of the original models. If models are overparametrized, then it could be that the pruned model only keeps the connections that are important for the pretraining task, but not for auxiliary tasks like probing. Our experiments show that there is a correlation between the pretrained model's cross-entropy loss and probing performance on various linguistic tasks. 
We believe that such correlation can be interpreted as strong evidence in favor of the \textit{rediscovery hypothesis}. 

While performing the above-mentioned analysis, several additional interesting observations have been made. 
\begin{itemize}
\item Break-down frequency analysis (Section \ref{sec:results1}) reveals that at moderate pruning rates the POS tag performance drop is quite similar across all the token frequency groups. It indicates that to preserve LM performance, the pruned models were forced to distribute their ``linguistic knowledge'' across all the token groups, and thus reinforces the \textit{rediscovery hypothesis}. 

We noted, however, that part of the probing performance could be due to exploitation of \textit{spurious memorization} in pretraining LM. A question on whether such \textit{spurious memorization} should be considered as \textit{linguistic knowledge} learnt by pretrained LM is an ongoing debate in the community \shortcite{Bender2020,sahlgren2021}. We believe that further investigation in this direction could bring more insights into the pretrained model's behavior. 
\item We have analyzed how the amount of \textit{linguistic knowledge} evolves during the pretraining process. Our analysis of intermediate pretraining checkpoints suggests that models follow \textit{NLP pipeline discovery} \textit{during} the pretraining process: it reaches good performance on syntactic tasks at the beginning of the training and keeps improving on semantic tasks further in training. 
\end{itemize}

Section \ref{sec:theory} proposes an information-theoretic formulation of the rediscovery hypothesis and a measure of the dependence between the text and linguistic annotations. 
In particular, Theorem~\ref{prop:main} states that neural language models\footnote{This is, language models which rely on intermediate representations of tokens.} \textit{do need to discover linguistic information} to perform at their best: when the language models are denied access to linguistic information, their performance is not optimal. 
The \textit{metric $\rho$}, which naturally appears in the development of our information-theoretic framework proposes a measure of the relationship between a given linguistic property and the underlying text. We show that it is also a reliable predictor of how suboptimal a language model will be if it is denied access to a given linguistic property. Our proposed methods of estimating $\rho$ can be used in further studies to \textit{quantify} the dependence between text and its per-token annotation. 

A number of previous studies \shortcite{tenney2018what,conneau2018you,hewitt2019structural,zhang2020billions} have shown some evidence in favour of \textit{rediscovery hypothesis}, while many others \shortcite{voita2020information,pimentel2020information,hewitt2019designing,elazar2020bert} questioned the methodology of those studies. However, none of the above-mentioned works questioned the rediscovery hypothesis \textit{per se} nor demonstrated any evidence for rejecting it. We believe that our work brings an additional understanding of this phenomenon. 
In addition, it is the first attempt to formalize the rediscovery hypothesis and provide a theoretical frame for it.

\section*{Acknowledgements}
The authors would like to thank the anonymous JAIR reviewers of this paper, the reviewers of previous versions, as well as Marc Dymetman, Arianna Bisazza, Germ\'an Kruszewski, Laurent Besacier, Shauli Ravfogel, and Yanai Elazar, for their detailed and constructive feedback. The work of Maxat Tezekbayev, Nuradil Kozhakhmet, Madina Babazhanova, and Zhenisbek Assylbekov is supported by the Nazarbayev University Collaborative Research Program 091019CRP2109.

\appendix


\section{Optimization}
\label{app:optimization}
\subsection*{CoVe} Two-layer BiLSTM encoder with size 600. For embedding matrix, we use {\sc GloVe} embeddings with 840B tokens and fix embedding matrix while training as was done in the original implementation \shortcite{mccann2017learned}. Thus we do not prune the embedding matrix in LTH experiments. For training, we used IWSLT 16 dataset. Scripts for preprocessing the data and training the model were adapted to a newer version of {\sc OpenNMT-py} \shortcite{klein2017opennmt}. We use the default hyperparameters of {\sc OpenNMT-py}: batch size 64, maximum training steps 100,000. We use the following early stopping criterion: we stop training the model if there is no improvement in 3 consecutive validation perplexity scores (validation is performed every 2,000 steps).

\subsection*{RoBERTa} Optimization is performed on the full Wikitext-103 data. We used the default {\tt roberta.base} tokenizer (distributed along with {\tt roberta.base} model) \shortcite{liu2019roberta}. The following training settings were used for training on 2 GPUs for 100 epochs:
\begin{itemize}
\item architecture: embedding size 512, 6 layers, 4 heads, hidden size 1024
\item batch size: 4096 tokens (with gradient accumulation for 4 steps)
\item dropout 0.3
\end{itemize}
Pruning rates used for LTH experiments are chosen as follows:
\begin{itemize}
\item step $0.025$ for pruning up to 20\% of model parameters
\item step $0.05$ between 20\% until 70\% of model parameters
\end{itemize}

\subsection*{SGNS} Embedding size 200, 15 epochs to train, 5 negative samples per training example, batch size 1024, window size 5. For training we use the \texttt{text8} dataset \shortcite{text8}, which is the first 100MB of the English Wikipedia dump on Mar. 3, 2006. For validation we use the next 10MB of the same dump. We were ignoring words that appeared less than 5 times in \texttt{text8}, and subsampling the rest as follows: each word $w$ in training set is discarded with probability $p=\sqrt{t/f(w)}+t/f(w)$, where $t=10^{-4}$, and $f(w)$ is the frequency of word $w$. We use {\sc Adam} optimizer with its default settings.

\section{Lack of Correlation for Better Performing Models}
\label{app:p_values}
In Section~\ref{sec:lth_probing} we noticed that when restricting the loss values of {\sc CoVe} to the better performing end (zoomed in regions in Figure~\ref{fig:cove_zoom}), drop in probing scores (compared to the scores of the baseline unpruned model) become indistinguishable. Here we treat this claim more rigorously. Assuming that $\ell^{\textsc{CoVe}}_{i}$ is the validation loss of \textsc{CoVe} at iteration $i$, and $\Delta s^{\textsc{CoVe},T}_{i}$ is the corresponding drop in the score on the probing task $T\in\{$NE, POS, Constituents, Structural$\}$, we obtain pairs $(\ell^{{\textsc{CoVe}}}_{i}, \Delta s^{{\textsc{CoVe}},{T}}_{i})$ and setup a simple linear regression
for the zoomed in regions in Figure~\ref{fig:cove_zoom}:
\begin{figure}
\centering
\includegraphics[width=.49\textwidth]{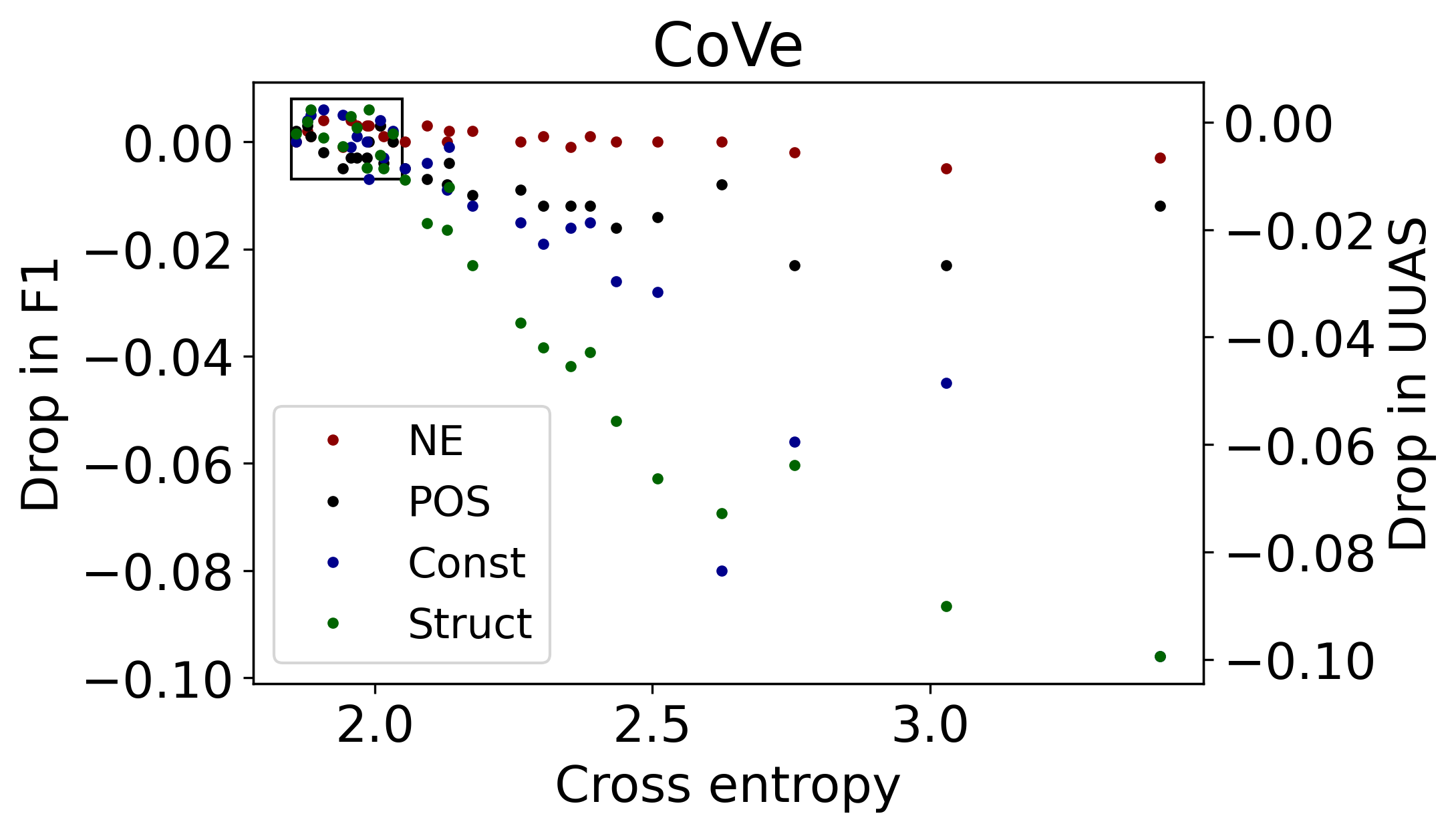}\hfill\includegraphics[width=.49\textwidth]{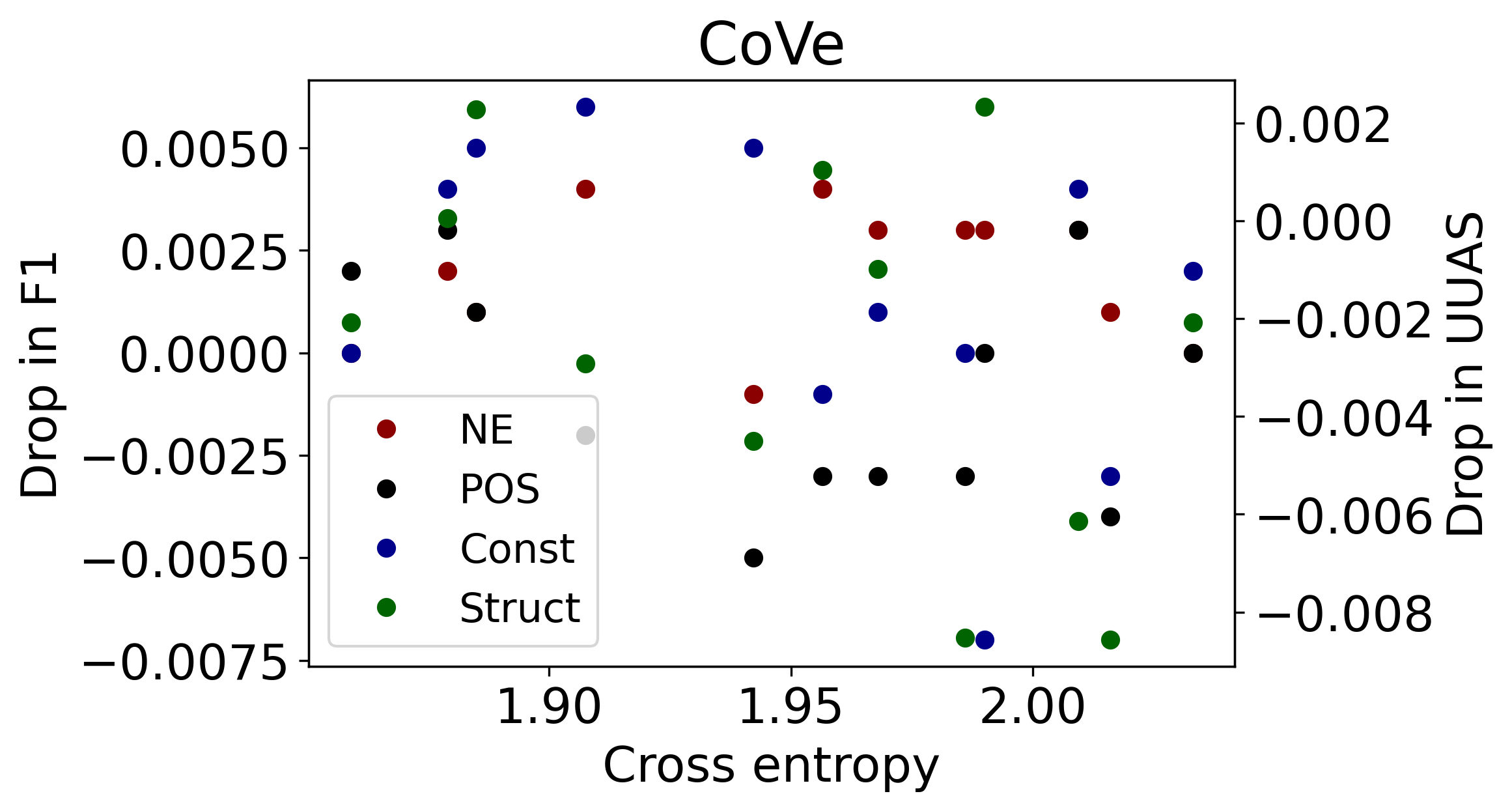}
\caption{Zooming in on better performing {\sc CoVe} models. Indication of the region to be zoomed in (left) and the zoomed in region (right).}
\label{fig:cove_zoom}
\end{figure}
$$
\Delta s^{\textsc{CoVe},T}_i=\alpha+\beta\cdot \ell^{\textsc{CoVe}}_i+\epsilon_i.
$$
Below are $p$-values when testing $H_0:\,\beta=0$ with a two-sided Student $t$-test.
\begin{table}[htbp]
\centering
\begin{tabular}{l | c c c c}
\toprule
Task & NE & POS & Constituents & Structural \\
\midrule
p-value & 0.725 & 0.306 & 0.184 & 0.160 \\
\bottomrule
\end{tabular}
\caption{Testing $\beta=0$, reported are $p$-values.}
\label{tab:my_label}
\end{table}
In all cases $\beta$ is \textit{not} significantly different from $0$. Hence, the correlation between model performance 
and its linguistic knowledge 
may become very weak for better solutions to the pretraining objectives. We argue that this phenomenon is related to the amount of information remaining in an LM after pruning, and how much of this information is enough for the probe. It turns out that in the best {\sc CoVe} (with the lowest pretraining loss) such information is redundant for the probe, and in the not-so-good (but still decent) {\sc CoVe} there is just enough of information for the probe to show its best performance.

\subsection*{MDL Analysis}\label{app:mdl}

Recently, \shortciteA{voita2020information} have addressed a related issue by proposing a minimum description length (MDL) analysis of probes, that takes into account the \textit{difficulty} of extracting the information.
Here, we apply their analysis to {\sc CoVe} (that is faster to experiment with than {\sc RoBERTA}), on top of the edge probe for constituents.\footnote{MDL requires the probe's response to be categorical and thus cannot be applied directly to the structural probe that predicts tree-distance between the vertices of a parse tree.} The resulting online codelengths vs cross-entropy values are provided in Fig.~\ref{fig:cove_mdl}. 
\begin{figure}[htbp]
\begin{minipage}[t]{.47\textwidth}
\includegraphics[width=\textwidth]{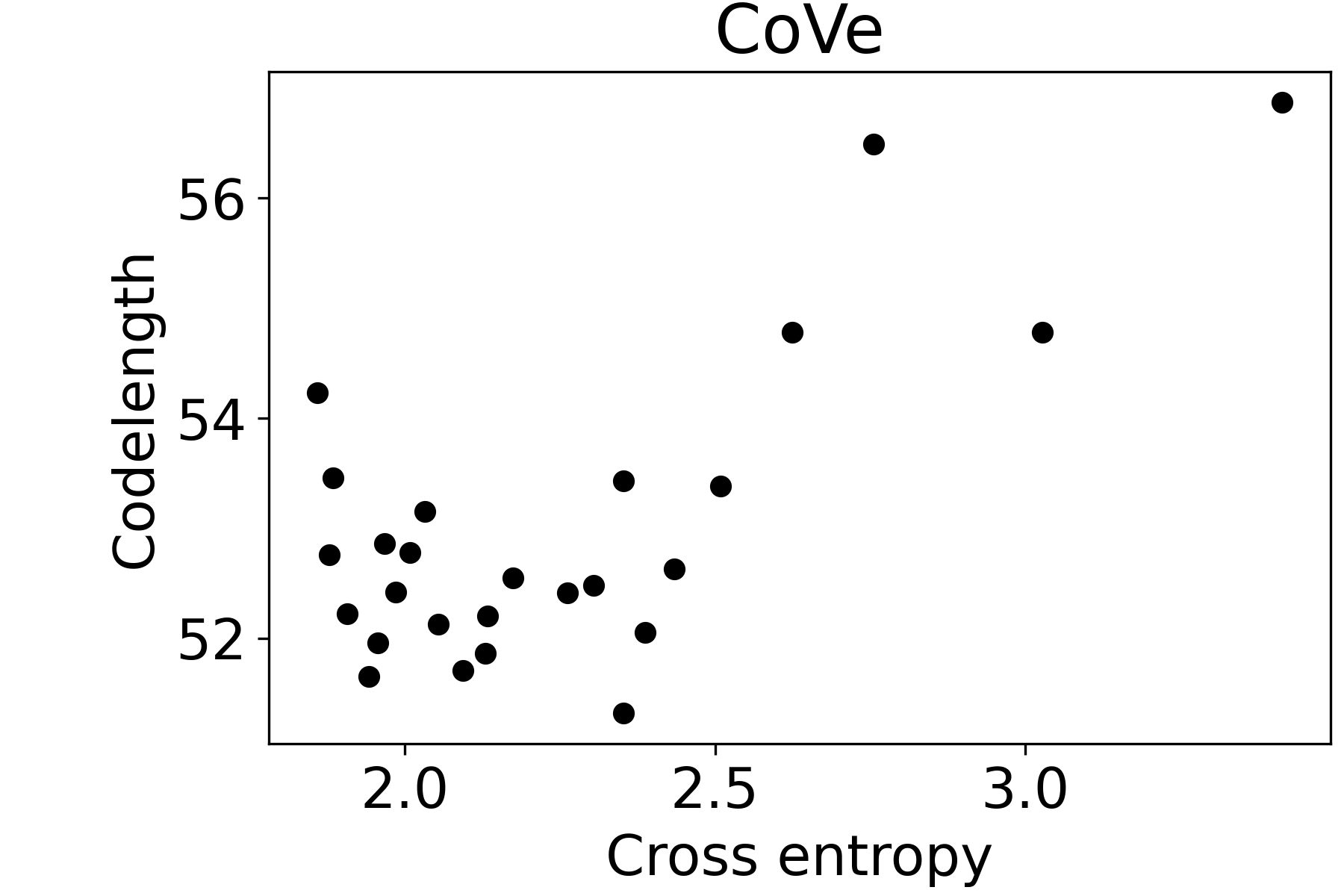}
\caption{MDL probing of {\sc CoVe} for constituents (online code). Unlike F1 score, the codelength takes into account not only the performance but also the amount of efforts to achieve it. 
}
\label{fig:cove_mdl} 
\end{minipage}\hfill
\begin{minipage}[t]{.47\textwidth}
\includegraphics[width=\textwidth]{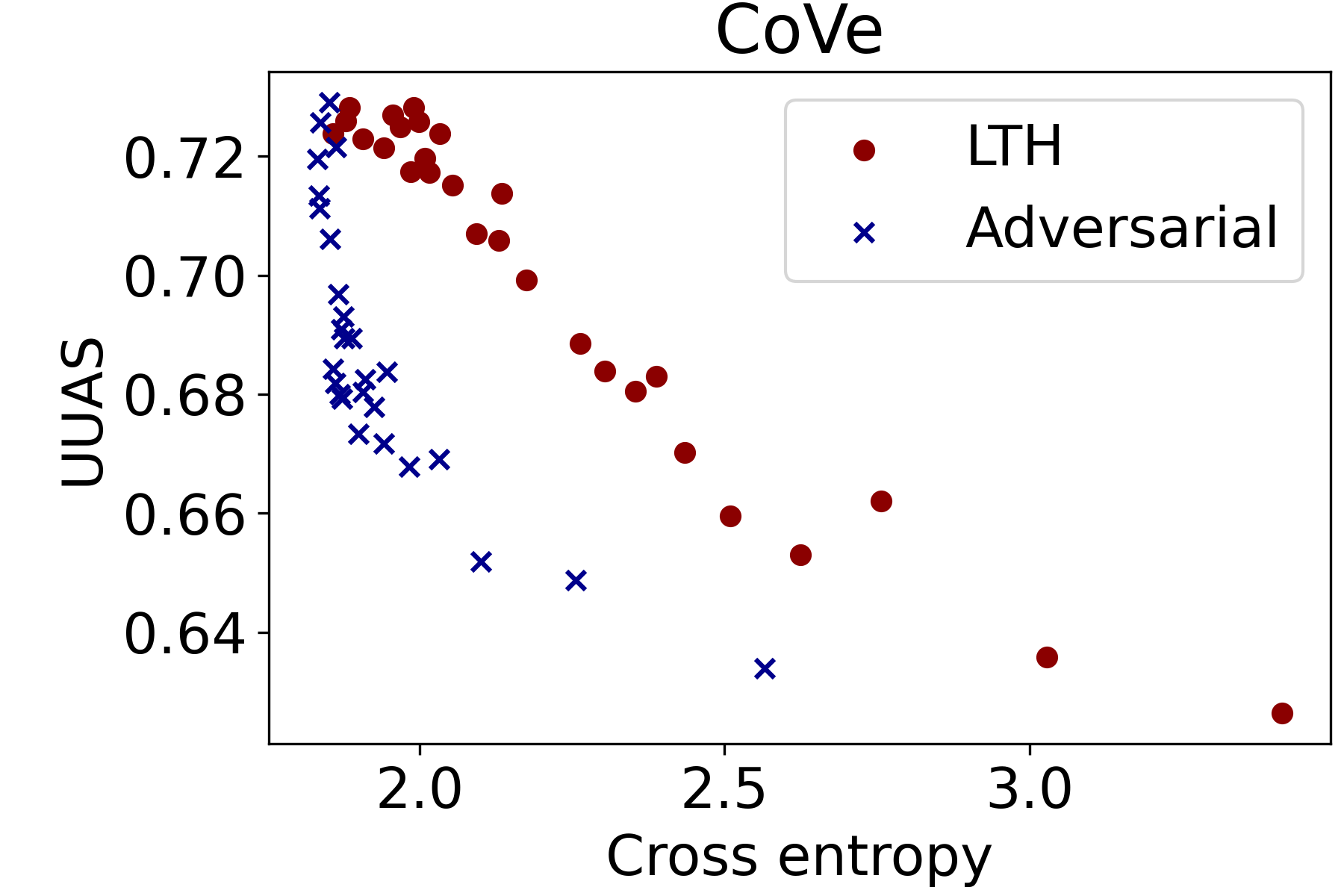}

\caption{Structural probing of adversarially trained {\sc CoVe}.}
\label{fig:cove_adv} 
\end{minipage}
\end{figure}
We observe an even \textit{wider} spectrum of models (with cross-entropies 1.9--2.5) that have indistinguishable codelengths. Nevertheless, the overall pattern does not contradict the rediscovery hypothesis---better models have better probing scores (lower codelengths).


\section{Adversarial Training} \label{app:adversarial}
In Sect.~\ref{sec:probes} we have analyzed linguistic knowledge discovered through standard language modeling pretraining. 
Here we propose a more radical approach, consisting in explicitly adding adversarial noise to the pretraining objective of {\sc CoVe} that enforces \textit{random} angles between contextualized representations of nearby words in the source sentence of the translation task.
Different to what is done in Sect.~\ref{sec:theory}, this is done on {\sc CoVE}.
For this, we exploit an empirical fact that the cosine similarity between word vectors tends to be a good predictor of their semantic similarity when trained non-adversarially.
We vary the amount of noise (through a hyperparameter), train the models, and probe them with the structural probe. 

\subsection{Training Details}\label{sec:cove_adv_train}
Let $F=[f_1,f_2,\ldots,f_n]$ be a sentence in the source language, and $E=[e_1,e_2,\ldots,e_m]$ be a corresponding sentence in the target language. Parallel corpus is the set of sentence pairs $\mathcal{D}=\{(F,E)\}$. Neural machine translation (NMT) model parameterized by $\boldsymbol\theta$ solves
\begin{equation}
\min_{\boldsymbol\theta}\frac{1}{|\mathcal{D}|}\sum_{(F,E)\in\mathcal{D}}\underbrace{-\log p_{\boldsymbol\theta}(E\mid F)}_{\ell^{\text{NMT}}(F,E)} 
\end{equation}
In case of {\sc CoVE}, the NMT model is an LSTM-based encoder-decoder architecture with attention \shortcite{bahdanau2014neural}, and let $\mathbf{h}_{1:n}=[\mathbf{h}_1,\mathbf{h}_2,\ldots,\mathbf{h}_n]$ be the top-level activations of the encoder that correspond to the sentence $F$.

Recall that for a pair of words $f_i$ and $f_j$, the cosine similarity between their vector representations $\mathbf{h}_i$ and $\mathbf{h}_j$ strongly correlates with the human similarity rating $r(f_i,f_j)$ \shortcite{mikolov2013distributed}. 
Strong correlation means that there is an approximately linear relationship between $r(f_i,f_j)$ and $\cos(\mathbf{h}_i, \mathbf{h}_j)$. 
In order to ``break'' this relationship we fit $\cos(\mathbf{h}_i,\mathbf{h}_j)$ to a \textit{random} $r\sim\mathcal{U}(0, 1)$ for nearby source words while training the NMT model (which {\sc CoVe} model is based on).
For a pair $(F,E)$ an individual objective becomes
$$
\ell(F,E)=\ell^{\text{NMT}}(F,E)+\alpha\cdot\ell^\text{Adv}(F),
$$
where 
$$
\ell^\text{Adv}(F)=\frac{1}{n}\sum_{i=1+k}^{n-k}\sum_{\substack{i-k\le j\le i+k\\j\ne0}}\mathbb{E}_{r_{ij}\sim\mathcal{U}(0,1)}[w\cdot\cos(\mathbf{h}_i,\mathbf{h}_j)+b-r_{ij}]^2,$$
$w$ and $b$ are regression coefficients, $k$ is the size of a sliding window, in our experiments we use $k=5$. Hyperparameter $\alpha$ regulates the impact of the adversarial loss.

\subsection{Results}

The probing results are provided in Fig.~\ref{fig:cove_adv}, where cross entropies are {\sc CoVe}'s validation loss values without the adversarial additive. 
Although the probing score decreases with an increase of cross-entropy, it does so significantly faster than in the case of LTH---we can have a near-to-best LM with a mediocre probing score. 
This would suggest that it is possible to find the state of the network that would encode less linguistic information given the same pretraining loss. This is however not the path naturally chosen by the pretraining models, therefore probably more difficult to reach.
Note, that several artifacts of {\sc CoVE} training could also explain these results. First, it can be because the pretrained embedding layer is not updated in these experiments, therefore keeping a good (static) representation of tokens. Moreover, recall that {\sc CoVE} is an encoder-decoder architecture, and reported cross-entropy loss is computed based on the decoder representations. The adversarial loss however has been defined on the encoder representations. Thus, even if the encoder representations got corrupted, the decoder still has a chance to recover and obtain reasonable cross-entropy loss. More extensive adversarial experiments might be necessary to be able to make any conclusion from this experiment. 

\section{Information Theory Background}\label{app:inf_theory}
For a random variable $X$ with the distribution function $p(x)$, its \textit{entropy} is defined as
$$
\h[X]:=\E_X[-\log p(X)].
$$
The quantity inside the expectation, $-\log p(x)$, is usually referred to as \textit{information content} or \textit{surprisal}, and can be interpreted as quantifying the level of ``surprise'' of a particular outcome $x$. As $p(x)\to0$, the surprise of observing $x$ approaches $+\infty$, and conversely as $p(x)\to1$, the surprise of observing $x$ approaches $0$. Then the entropy $\h[X]$ can be interpreted as the average level of ``information'' or ``surprise'' inherent in the $X$'s possible outcomes. We will use the following
\begin{property}\label{prop:ent}
If $Y=f(X)$, then
\begin{equation}
\h[Y]\le\h[X],
\end{equation}
i.e.\ the entropy of a variable can only decrease when the latter is passed through a (deterministic) function.
\end{property}

Now let $X$ and $Y$ be a pair of random variables with the joint distribution function $p(x,y)$. The \textit{conditional entropy} of $Y$ given $X$ is defined as
$$
\h[Y\mid X] := \E_{X,Y}[-\log p(Y\mid X)],
$$
and can be interepreted as the amount of information needed to describe the outcome of $Y$
given that the outcome of $X$ is known. The following property is important for us.
\begin{property}\label{prop:cond}
$\h[Y\mid X]=0\,\,\Leftrightarrow\,\,Y=f(X)$.
\end{property}

The \textit{mutual information} of $X$ and $Y$ is defined as
$$
\I[X;Y]:=\mathbb{E}_{X,Y}\left[\log\frac{p(X,Y)}{p(X)\cdot p(Y)}\right]
$$
and is a measure of mutual dependence between $X$ and $Y$. We will need the following properties of the mutual information.

\begin{property}\label{prop:mi}

\begin{enumerate}
\item \label{prop:diff}$\I[X;Y]=\h[X]-\h[X\mid Y]=\h[Y]-\h[Y\mid X]$.
\item \label{prop:indep}$\I[X;Y]=0\,\,\Leftrightarrow\,$ $X$ and $Y$ are independent.
\item \label{prop:dpi}For a function $f$, $\I[X;Y]\ge \I[X;f(Y)]$. 
\item \label{prop:joint} $\h[X,Y]=\h[X]+\h[Y]-\I[X;Y]$, 
\end{enumerate}
\end{property}
\noindent Property~\ref{prop:mi}.\ref{prop:dpi} is known as data processing inequality, and it means that post-processing cannot increase information. By \emph{post-processing} we mean a transformation $f(Y)$ of a random variable $Y$, independent of other random variables. In Property~\ref{prop:mi}.\ref{prop:joint}, $\h[X,Y]$ is the \textit{joint entropy} of $X$ and $Y$ which is defined as 
$$\h[X,Y]:=\E_{X,Y}[-\log p(X,Y)]$$ and is a measure of ``information'' associated with the outcomes of the tuple $(X,Y)$.

\section{Proof of Theorem~\ref{prop:main}}\label{app:proof}
The proof is split into two steps: first, in Lemma~\ref{prop:main_rigor} we show that the decrease in mutual information $\Delta\I:=\I[W_i;\mathbf{x}_i]-\I[W_i;\tilde{\mathbf{x}}_i]$ is $\Omega(\rho)$, and then in Lemma~\ref{lem:delta_i_l} we approximate $\Delta\I$ by the increase in cross-entropy loss $\ell(W_i,\tilde{\mathbf{x}}_i)-\ell(W_i,\mathbf{x}_i)$.

\begin{lemma} \label{prop:main_rigor}Let $T$ be an annotation of a sentence $W_{1:n}=[W_1,\ldots,W_i,\ldots,W_n]$ and let $\mathbf{x}_i$, $\tilde{\mathbf{x}}_i$ be (contextualized) word vectors corresponding to a token $W_i$ such that
\begin{equation}
\I[T;\tilde{\mathbf{x}}_i]=0.\label{eq:x_min} 
\end{equation}
Denote
\begin{align}
\rho&:=\I[T;W_{1:n}]/\h[W_{1:n}],\quad \rho\in[0,1],\label{eq:mi_p}\\
\sigma_i&:=\I[W_i;\mathbf{x}_i]/\h[W_{1:n}],\quad \sigma_i\in[0,1].\label{eq:x_max}
\end{align}
If $\rho>1-\sigma_i$, then
\begin{equation}
\I[W_i;\tilde{\mathbf{x}}_i]<\I[W_i;\mathbf{x}_i].\label{eq:main}
\end{equation}
and the difference $\Delta\I:=\I[W_i;\mathbf{x}_i]-\I[W_i;\tilde{\mathbf{x}}_i]$ is lowerbounded as 
\begin{equation}
\Delta\I\ge(\rho+\sigma_i-1)\cdot\h[W_{1:n}].\label{eq:delta_i}
\end{equation}
\end{lemma}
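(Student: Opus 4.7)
The strategy is to upper bound $\I[W_i;\tilde{\mathbf{x}}_i]$ in terms of $\h[W_{1:n}\mid T]$ by exploiting the constraint $\I[T;\tilde{\mathbf{x}}_i]=0$, and then substitute the definitions of $\rho$ and $\sigma_i$. The key observation is that if the annotation $T$ is a deterministic function of $W_{1:n}$ (the standard assumption made in Section~\ref{sec:exp_setup}), then the pair $(W_{1:n},T)$ carries exactly the same information as $W_{1:n}$, which lets us introduce $T$ for free into a mutual-information expression and then drop it via the independence assumption.

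First, I would invoke Property~\ref{prop:mi}.\ref{prop:dpi} (data processing) to reduce to a statement about the whole sentence: since $W_i$ is a coordinate projection of $W_{1:n}$,
\begin{equation*}
\I[W_i;\tilde{\mathbf{x}}_i]\le \I[W_{1:n};\tilde{\mathbf{x}}_i].
\end{equation*}
Next, because $T=f(W_{1:n})$ is deterministic, adjoining $T$ to $W_{1:n}$ does not change its information content, so $\I[W_{1:n},T;\tilde{\mathbf{x}}_i]=\I[W_{1:n};\tilde{\mathbf{x}}_i]$. Applying the chain rule for mutual information on the left-hand side,
\begin{equation*}
\I[W_{1:n};\tilde{\mathbf{x}}_i]=\I[T;\tilde{\mathbf{x}}_i]+\I[W_{1:n};\tilde{\mathbf{x}}_i\mid T]=\I[W_{1:n};\tilde{\mathbf{x}}_i\mid T],
\end{equation*}
where the second equality uses hypothesis~\eqref{eq:x_min}. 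Bounding a conditional mutual information by the corresponding conditional entropy gives
\begin{equation*}
\I[W_{1:n};\tilde{\mathbf{x}}_i\mid T]\le \h[W_{1:n}\mid T]=\h[W_{1:n}]-\I[T;W_{1:n}]=(1-\rho)\,\h[W_{1:n}],
\end{equation*}
using Property~\ref{prop:mi}.\ref{prop:diff} and the definition~\eqref{eq:mi_p} of $\rho$.

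Combining the two chains of inequalities yields $\I[W_i;\tilde{\mathbf{x}}_i]\le (1-\rho)\h[W_{1:n}]$. Subtracting from $\I[W_i;\mathbf{x}_i]=\sigma_i\h[W_{1:n}]$ (by~\eqref{eq:x_max}) produces the quantitative bound~\eqref{eq:delta_i}:
\begin{equation*}
\Delta\I = \I[W_i;\mathbf{x}_i]-\I[W_i;\tilde{\mathbf{x}}_i]\ge (\rho+\sigma_i-1)\,\h[W_{1:n}],
\end{equation*}
which is strictly positive under the hypothesis $\rho>1-\sigma_i$, establishing~\eqref{eq:main}. The only delicate step is justifying $\I[W_{1:n},T;\tilde{\mathbf{x}}_i]=\I[W_{1:n};\tilde{\mathbf{x}}_i]$: it relies on the determinism of $T$ given $W_{1:n}$ (equivalently $\h[T\mid W_{1:n}]=0$, cf.\ Property~\ref{prop:cond}), which is the annotation assumption already invoked in Section~\ref{sec:exp_setup}. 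Apart from this, the argument is a clean three-line manipulation of mutual information, and I would expect the rest (the translation into cross-entropy via Lemma~\ref{lem:delta_i_l}, deferred to the next lemma) to be the more technical part of the overall theorem.
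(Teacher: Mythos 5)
Your proof is correct, and it reaches the paper's key intermediate bound $\I[W_i;\tilde{\mathbf{x}}_i]\le(1-\rho)\,\h[W_{1:n}]$ by a genuinely different route. The paper works with the joint entropy of the pair $(T,\tilde{\mathbf{x}}_i)$: from $\I[T;\tilde{\mathbf{x}}_i]=0$ it gets $\h[T,\tilde{\mathbf{x}}_i]=\h[T]+\h[\tilde{\mathbf{x}}_i]$, bounds this by $\h[W_{1:n}]$ because \emph{both} $T$ and $\tilde{\mathbf{x}}_i$ are assumed deterministic functions of $W_{1:n}$, deduces $\h[\tilde{\mathbf{x}}_i]\le(1-\rho)\h[W_{1:n}]$, and finishes with $\I[W_i;\tilde{\mathbf{x}}_i]\le\h[\tilde{\mathbf{x}}_i]$. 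You instead lift to $\I[W_{1:n};\tilde{\mathbf{x}}_i]$ by data processing, adjoin $T$ for free, apply the chain rule so that hypothesis~\eqref{eq:x_min} kills the unconditional term, and bound the conditional mutual information by $\h[W_{1:n}\mid T]=(1-\rho)\h[W_{1:n}]$. Your version buys two things: it only needs $T$ (not $\tilde{\mathbf{x}}_i$) to be a deterministic function of $W_{1:n}$, so it tolerates stochastic or externally-informed embeddings; and it never invokes the entropy $\h[\tilde{\mathbf{x}}_i]$ of the embedding itself, which in the paper's argument is only guaranteed to be a well-defined non-negative (discrete) entropy because of that extra determinism assumption. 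The cost is that you rely on the chain rule for mutual information and on conditional mutual information, neither of which appears in the paper's list of properties in Appendix~\ref{app:inf_theory} --- standard facts, but if you wanted to stay strictly within the paper's stated toolkit you would need to add them. The final step (subtracting from $\I[W_i;\mathbf{x}_i]=\sigma_i\h[W_{1:n}]$ under $\rho>1-\sigma_i$) is identical in both arguments.
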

\begin{proof} From \eqref{eq:x_min} and by Property~\ref{prop:mi}.\ref{prop:joint}, we have
\begin{align}
\h[T,\tilde{\mathbf{x}}_i]&=\h[T]+\h[\tilde{\mathbf{x}}_i]-\underbrace{\I[T;\tilde{\mathbf{x}}_i]}_{0}\notag\\&=\h[T]+\h[\tilde{\mathbf{x}}_i].\label{eq:joint_sum}
\end{align}
On the other hand, both the annotation $T$ and the word vectors $\tilde{\mathbf{x}}_i$ are obtained from the underlying sentence $W_{1:n}$, and therefore $T=T(W_{1:n})$ and $\tilde{\mathbf{x}}_i=\tilde{\mathbf{x}}_i(W_{1:n})$, i.e.\ the tuple $(T,\tilde{\mathbf{x}}_i)$ is a function of $W_{1:n}$, and by Property~\ref{prop:ent},
\begin{equation}
\h[T,\tilde{\mathbf{x}}_i]\le\h[W_{1:n}].\label{eq:joint_ub}
\end{equation}
From \eqref{eq:joint_sum} and \eqref{eq:joint_ub}, we get
\begin{equation}
\h[T]+\h[\tilde{\mathbf{x}}_i]\le\h[W_{1:n}].\label{eq:sum_ub}
\end{equation}
Since $T=T(W_{1:n})$ and $\tilde{\mathbf{x}}_i=\tilde{\mathbf{x}}_i(W_{1:n})$, by Property~\ref{prop:cond} we have $\h[T\mid W_{1:n}]=0$ and $\h[\tilde{\mathbf{x}}_i\mid W_{1:n}]=0$, and therefore
$$
\I[T;W_{1:n}]=\h[T]-\h[T\mid W_{1:n}]=\h[T]
$$
Plugging this into \eqref{eq:sum_ub}, rearranging the terms, and taking into account \eqref{eq:mi_p}, we get
\begin{equation}
\h[\tilde{\mathbf{x}}_i]\le\h[W_{1:n}]-\underbrace{\I[T;W_{1:n}]}_{\rho\cdot\h[W_{1:n}]}=(1-\rho)\cdot\h[W_{1:n}].\label{eq:ub}
\end{equation}
Also, by Property~\ref{prop:mi}.\ref{prop:diff},
\begin{equation}
\I[W_i;\tilde{\mathbf{x}}_i]=\h[\tilde{\mathbf{x}}_i]-\underbrace{\h[\tilde{\mathbf{x}}_i\mid W_i]}_{\ge0}\le\h[\tilde{\mathbf{x}}_i]\label{eq:mi_h}
\end{equation}
From \eqref{eq:x_max}, \eqref{eq:ub}, \eqref{eq:mi_h}, and the assumption $\rho>1-\sigma_i$, we have
$$
\I[W_i;\tilde{\mathbf{x}}_i]\le\h[\tilde{\mathbf{x}}_i]\le(1-\rho)\cdot \h[W_{1:n}]<\sigma_i\cdot \h[W_{1:n}]=\I[W_i;\mathbf{x}_i],
$$
which implies \eqref{eq:main} and \eqref{eq:delta_i}.\end{proof}

\begin{remark}
Equation~\eqref{eq:mi_p} quantifies the dependence between $T$ and $W_{1:n}$, and it means that $T$ carries $100\cdot\rho\%$ of information contained in $W_{1:n}$. Similarly, Equation~\eqref{eq:x_max} means that both $W_i$ and $\mathbf{x}_i$ carry at least $100\cdot\sigma_i\%$ of information contained in $W_{1:n}$. By Firth's distributional hypothesis~\cite{Firth57},\footnote{``you shall know a word by the company it keeps''} $\sigma_i$ significantly exceeds zero. 
\end{remark}

\begin{remark}
The mutual information $\I[W_i;\boldsymbol\xi_i]$ can be interpreted as the performance of the best language model that tries to predict the token $W_{i}$ given its contextual representation $\boldsymbol\xi_i$.\footnote{Just as $\I[T_i;\boldsymbol\xi_{i}]$ is interpreted as the performance of the best probe that tries to predict the linguistic property $T_i$ given the embedding $\boldsymbol\xi_i$ \shortcite{pimentel2020information}} Therefore, inequalities \eqref{eq:main} and \eqref{eq:delta_i} mean that removing a linguistic structure from word vectors \textit{does} harm the performance of a language model based on such word vectors, and the more the linguistic structure is interdependent with the underlying text (that is being predicted by the language model) the bigger is the harm.
\end{remark}

We treat $\I[W;\boldsymbol\xi]$ as the performance of a language model. However, in practice, this performance is measured by the validation loss $\ell$ of such a model, which is usually the cross-entropy loss. Nevertheless, the change in mutual information \textit{can} be estimated by the change in the LM objective, as we show below.

\begin{lemma}\label{lem:delta_i_l}
Let $\mathbf{x}_i$ and $\tilde{\mathbf{x}}_i$ be (contextualized) embeddings of a token $W_i$ in a sentence $W_{1:n}=[W_1,\ldots,W_i,\ldots,W_n]$. Let $\ell(W_i,\boldsymbol\xi_i)$ be the cross-entropy loss of a neural (masked) language model $q_{\boldsymbol{\theta}}(w_i\mid\boldsymbol\xi_i)$, parameterized by $\boldsymbol\theta$, that provides distribution over the vocabulary $\mathcal{W}$ given vector representation $\boldsymbol\xi_i$ of the $W_i$'s context. Then
\begin{equation}
\I[W_i;\mathbf{x}_i]-\I[W_i;\tilde{\mathbf{x}}_i]\approx\ell(W_i,\tilde{\mathbf{x}}_i)-\ell(W_i, \mathbf{x}_i).\label{eq:delta_i_l}
\end{equation}
\end{lemma}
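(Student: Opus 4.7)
The plan is to exploit the standard decomposition of cross-entropy into conditional entropy plus a KL divergence, and then invoke the definition of mutual information to rewrite everything in terms of conditional entropies.

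First, I would unfold the cross-entropy loss. For any vector representation $\boldsymbol\xi_i$ of the context,
\begin{align*}
\ell(W_i,\boldsymbol\xi_i)
&= \E_{(W_i,\boldsymbol\xi_i)\sim p}\bigl[-\log q_{\boldsymbol\theta}(W_i\mid\boldsymbol\xi_i)\bigr] \\
&= \E_{\boldsymbol\xi_i}\!\left[\sum_{w}p(w\mid\boldsymbol\xi_i)\log\frac{p(w\mid\boldsymbol\xi_i)}{q_{\boldsymbol\theta}(w\mid\boldsymbol\xi_i)} - \sum_w p(w\mid\boldsymbol\xi_i)\log p(w\mid\boldsymbol\xi_i)\right] \\
&= \E_{\boldsymbol\xi_i}\!\bigl[\mathrm{KL}\bigl(p(\cdot\mid\boldsymbol\xi_i)\,\|\,q_{\boldsymbol\theta}(\cdot\mid\boldsymbol\xi_i)\bigr)\bigr] + \h[W_i\mid\boldsymbol\xi_i].
\end{align*}
So cross-entropy equals conditional entropy plus an average KL divergence that measures how far the trained model $q_{\boldsymbol\theta}$ is from the true conditional $p$.

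Next, I would invoke the working assumption that, after training to convergence, the learned distribution is a good approximation of the true conditional on either input, i.e. the KL terms for $\boldsymbol\xi_i=\mathbf{x}_i$ and $\boldsymbol\xi_i=\tilde{\mathbf{x}}_i$ are both small and approximately equal. Under this assumption,
$$
\ell(W_i,\tilde{\mathbf{x}}_i)-\ell(W_i,\mathbf{x}_i)\approx \h[W_i\mid\tilde{\mathbf{x}}_i]-\h[W_i\mid\mathbf{x}_i].
$$
Using Property~\ref{prop:mi}.\ref{prop:diff}, $\I[W_i;\boldsymbol\xi_i]=\h[W_i]-\h[W_i\mid\boldsymbol\xi_i]$, so
$$
\h[W_i\mid\tilde{\mathbf{x}}_i]-\h[W_i\mid\mathbf{x}_i]=\I[W_i;\mathbf{x}_i]-\I[W_i;\tilde{\mathbf{x}}_i],
$$
since the marginal $\h[W_i]$ cancels. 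Chaining these two identities gives \eqref{eq:delta_i_l}, and combining with Lemma~\ref{prop:main_rigor} immediately yields the supralinear-in-$\rho$ lower bound stated in Theorem~\ref{prop:main}.

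The only non-trivial step is justifying the ``$\approx$'' — that is, why the two KL residuals largely cancel. I would argue this by noting that the pretrained softmax head is refit (fine-tuned) on the modified embeddings $\tilde{\mathbf{x}}_i$ before measuring $\ell(W_i,\tilde{\mathbf{x}}_i)$, as described in Section~\ref{sec:removals}, so that $q_{\boldsymbol\theta}(\cdot\mid\tilde{\mathbf{x}}_i)$ also targets the corresponding true conditional. With sufficient model capacity and data, both KL terms are close to the irreducible optimization error, which is small and approximately input-agnostic, and hence subtracts out. This is the only place where the statement is approximate rather than exact; everything else is a direct identity from information theory.
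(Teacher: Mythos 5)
Your proposal is correct and follows essentially the same route as the paper: both reduce $\I[W_i;\mathbf{x}_i]-\I[W_i;\tilde{\mathbf{x}}_i]$ to $\h[W_i\mid\tilde{\mathbf{x}}_i]-\h[W_i\mid\mathbf{x}_i]$ via Property~\ref{prop:mi}.\ref{prop:diff} and then replace the true conditional entropies by the model's cross-entropy losses. Your explicit KL-divergence decomposition is a slightly sharper justification of the ``$\approx$'' step than the paper's (you only need the two KL residuals to approximately cancel, whereas the paper tacitly assumes each cross-entropy individually approximates its conditional entropy), but the underlying argument is the same.
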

\begin{proof}
By Property~\ref{prop:mi}.\ref{prop:diff}, we have
\begin{align}
\Delta\I:&=\I[W_i;\mathbf{x}_i]-\I[W_i;\tilde{\mathbf{x}}_i]=\h[W_i]-\h[W_i\mid\mathbf{x}_i]-\h[W_i]+\h[W_i\mid\tilde{\mathbf{x}}_i]\notag\\
&=\h[W_i\mid\tilde{\mathbf{x}}_i]-\h[W_i\mid\mathbf{x}_i]\approx\h_{q}[W_i\mid\tilde{\mathbf{x}}_i]-\h_{q}[W_i\mid\mathbf{x}_i],\label{eq:delta_i_approx}
\end{align}
where $\h_q[W_i\mid\boldsymbol\xi_i]$---a cross entropy---is an estimate of $\h[W_i\mid\boldsymbol\xi_i]$ when the true distribution $p(w_i\mid\boldsymbol{\xi}_i)$ is replaced by a parametric language model $q_{\boldsymbol\theta}(w_i\mid\boldsymbol{\xi}_i)$, which is exactly the cross-entropy loss function of a LM $q$:
\begin{equation}
\h_q[W_i\mid\boldsymbol\xi_i]=\E_{(W_i,C_i)\sim \mathcal{D}}[-\log q_{\boldsymbol\theta}\left(W_i\mid\boldsymbol\xi_i\right)]=:\ell(W_i,\boldsymbol\xi_i).\label{eq:xent}
\end{equation}
From \eqref{eq:delta_i_approx} and \eqref{eq:xent}, we have \eqref{eq:delta_i_l}.
\end{proof}

\begin{remark}
When we approximate $\h[W_i\mid\mathbf{x}_i]\approx\mathrm{H}_q[W_i\mid\mathbf{x}_i]$, here the parameters $\boldsymbol{\theta}$ of $q=q_{\boldsymbol\theta}(w_i\mid\mathbf{x}_i)$ are only the LM head's parameters (e.g., the weights and biases of the output softmax layer in {\sc BERT}), and we assume that $\mathbf{x}_i$ can be \emph{any} representation (e.g., from the last encoding layer of BERT). We assume that $\boldsymbol{\theta}$ are chosen to minimize the KL-divergence between the true distribution $p(w_i\mid\mathbf{x}_i)$ and the model $q_{\boldsymbol{\theta}}(w_i\mid\mathbf{x}_i)$, thus making the approximation reasonable.

Similarly, when we approximate $\h[W_i\mid\tilde{\mathbf{x}}_i]\approx\h_q[W_i\mid\tilde{\mathbf{x}}_i]$ we again have the LM head $q_{\boldsymbol{\eta}}(w_i\mid\tilde{\mathbf{x}}_i)$, whose parameters $\boldsymbol\eta$ should be chosen to minimize the KL-divergence between true $p(w_i\mid\tilde{\mathbf{x}}_i)$ and the model $q_{\boldsymbol{\eta}}(w_i\mid\tilde{\mathbf{x}}_i)$, for the approximation to be reasonable. In the experimental part (Section~\ref{sec:experiments}), we do exactly this: after applying INLP (or GRL) to a representation, we fine-tune the softmax layer (but keep the encoding layers frozen) so that it can adapt to the modified representation.
\end{remark}

\section{Using GRL in Finetuning Mode}\label{app:grl_finetune}
In order to make GRL experiments more comparable to INLP we pretrain {\sc BERT} (as {\sc RoBERTa}) on WikiText-103 and finetune it with GRL on the OntoNotes POS annotation The results for different values of the GRL's $\lambda$ are given in Table~\ref{tab:grl_finetune_results}.
\begin{table}[htbp]
\centering
\begin{tabular}{c c c | c c c}
\toprule
$\lambda$ & $\ell_\text{\sc BERT}$ & Accuracy & $\lambda$ & $\ell_\text{\sc BERT}$ & Accuracy \\
\midrule
0 & 2.062 & 0.942 & 10 & 5.808 & 0.880 \\
0.1 & 2.070 & 0.920 & 50 & 6.362 & 0.844 \\
1 & 2.647 & 0.917 & 100 & 6.473 & 0.828 \\
5 & 3.585 & 0.913 & 1000 & 6.267 & 0.865 \\
\bottomrule
\end{tabular}
\caption{Finetuning {\sc RoBERTa} with GRL on OntoNotes POS annotation.}
\label{tab:grl_finetune_results}
\end{table}
The majority class accuracy on this dataset is 0.126, and we see that no matter how large the $\lambda$ we take, the GRL still cannot lower the accuracy to this level. This confirms observations of \shortciteA{elazar2018adversarial} that GRL may not always remove all the information from a model it is applied to.

\section{INLP Dynamics}\label{app:inlp_dynamics}
\begin{figure}[h]
\centering
\includegraphics[width=.4\textwidth]{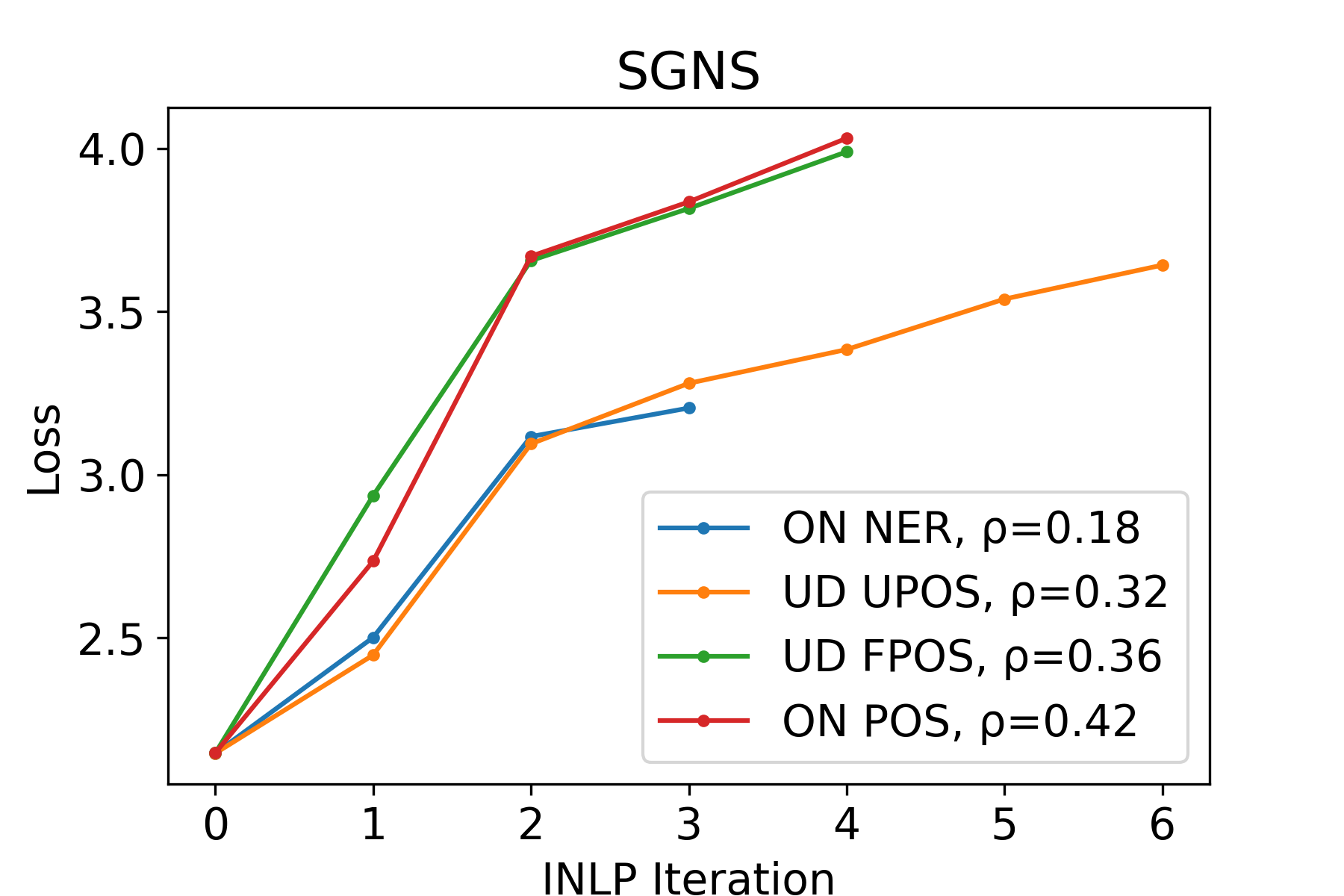}\hspace{15pt}\includegraphics[width=.4\textwidth]{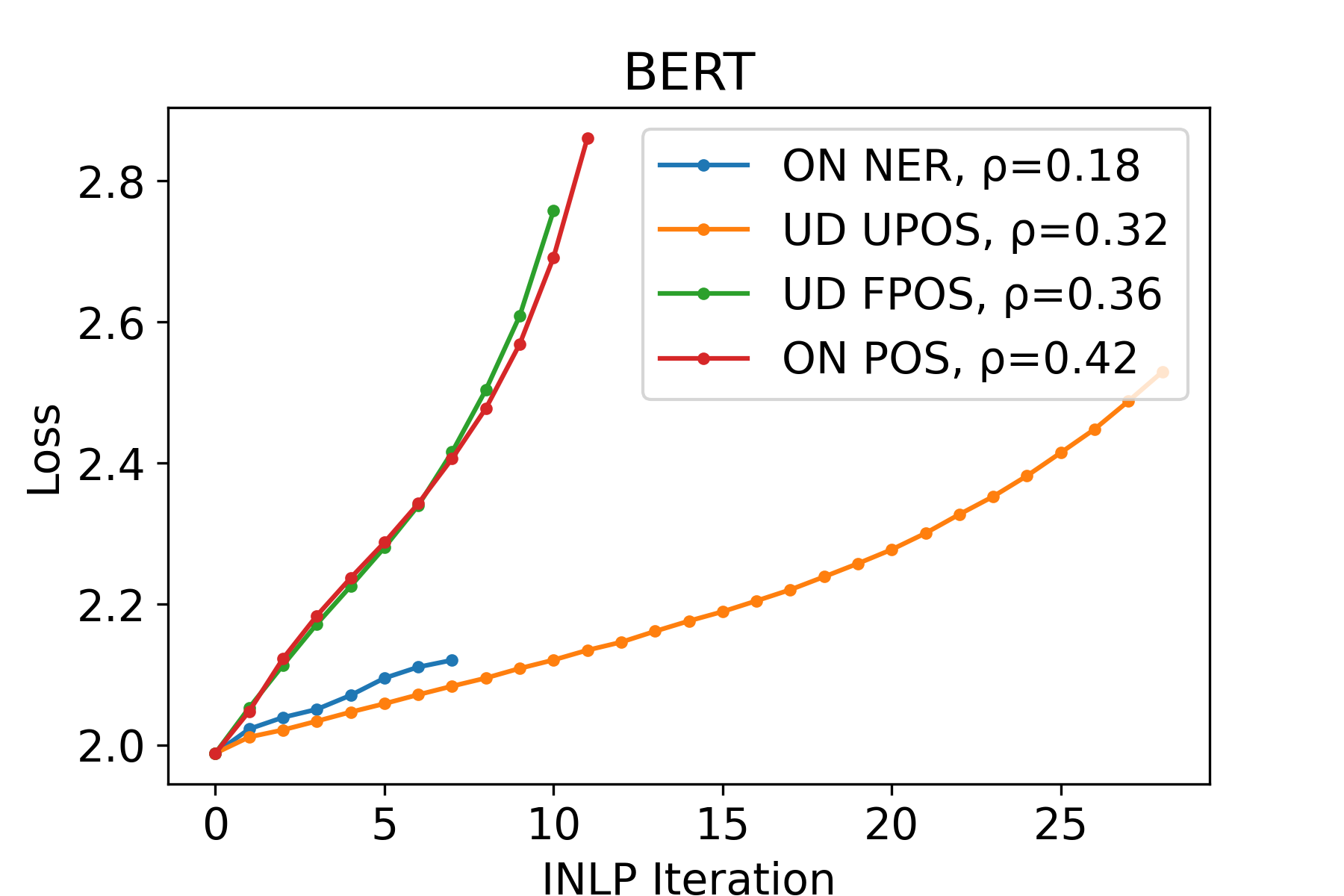}
\caption{Loss increase w.r.t. INLP iteration for different tasks. ON stands for OntoNotes, UD for Universal Dependencies. The UD EWT dataset has two types of POS annotation: coarse tags (UPOS) and fine-grained tags (FPOS)}
\label{fig:inlp_loss}
\end{figure}

\begin{figure}[h]
\begin{center}
\includegraphics[width=.4\textwidth]{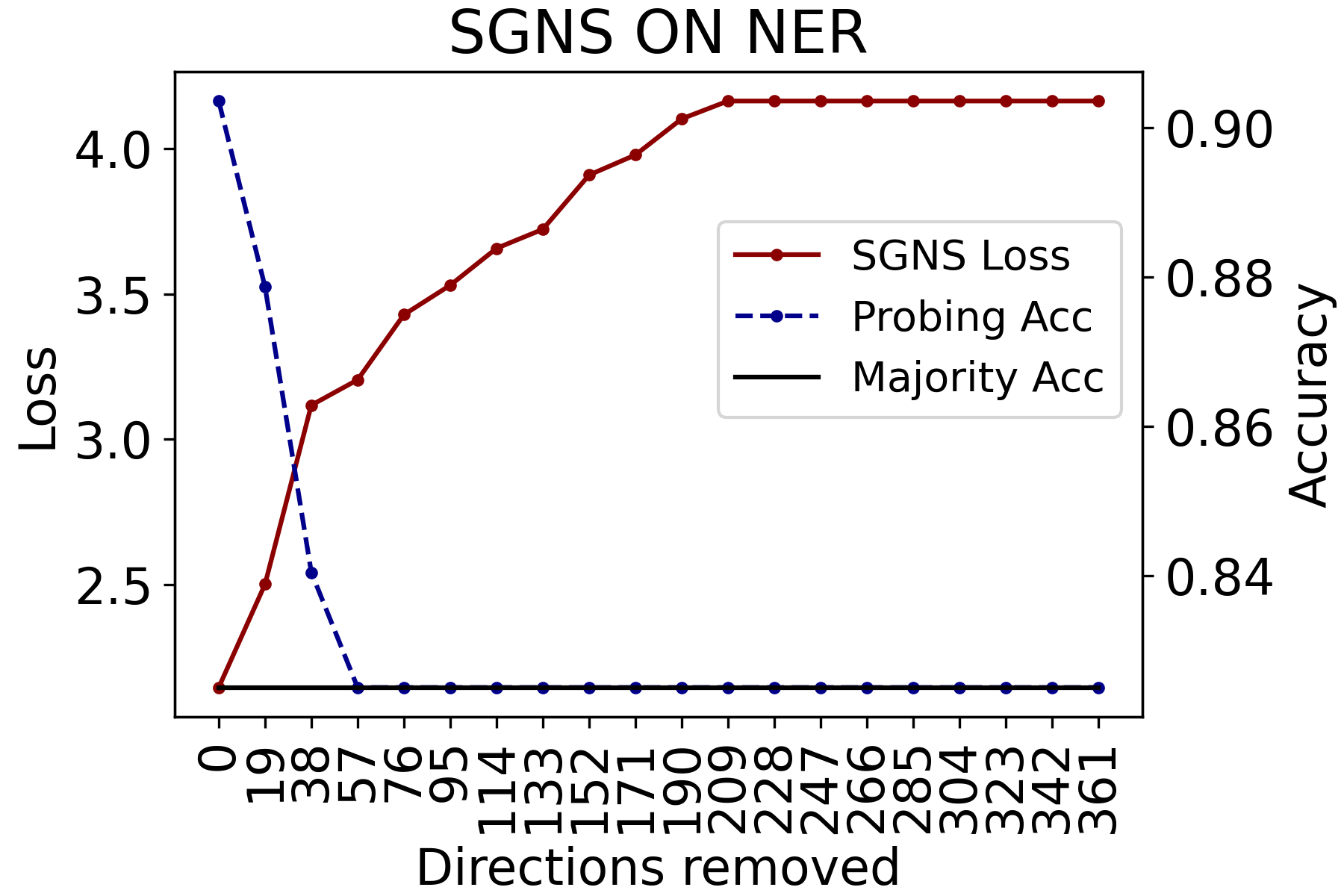}\hspace{15pt}
\includegraphics[width=.4\textwidth]{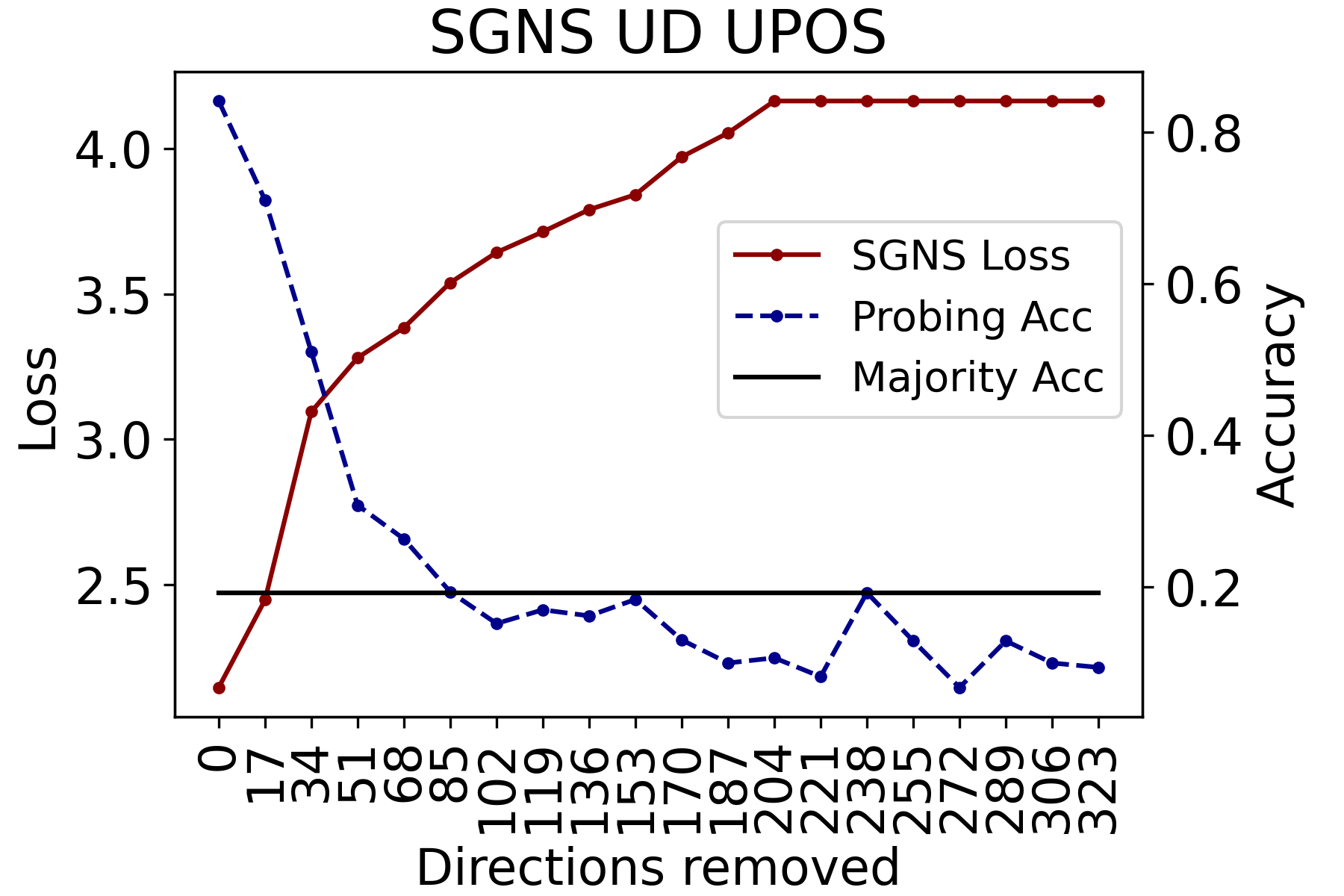}\\~\\
\includegraphics[width=.4\textwidth]{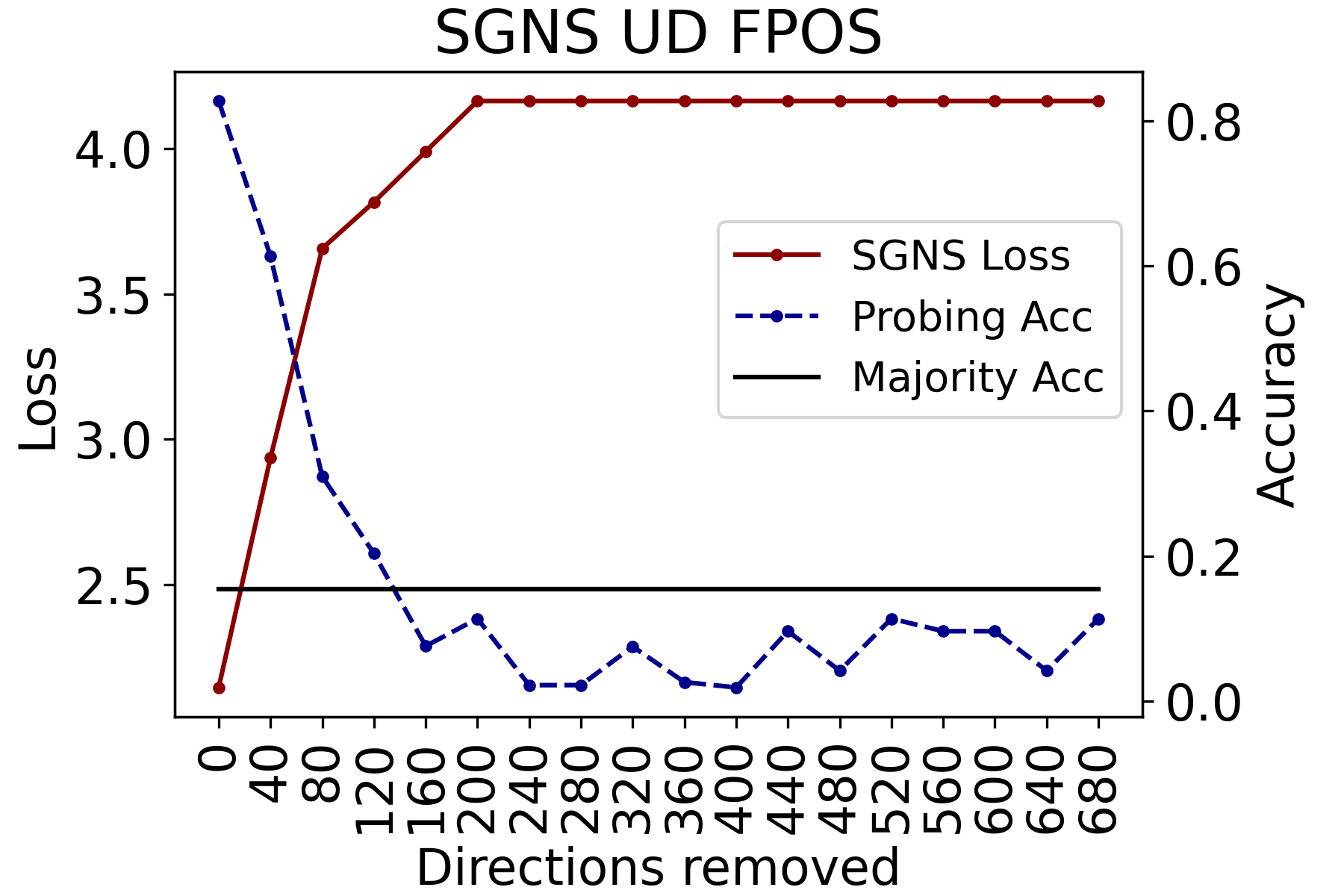}\hspace{15pt}
\includegraphics[width=.4\textwidth]{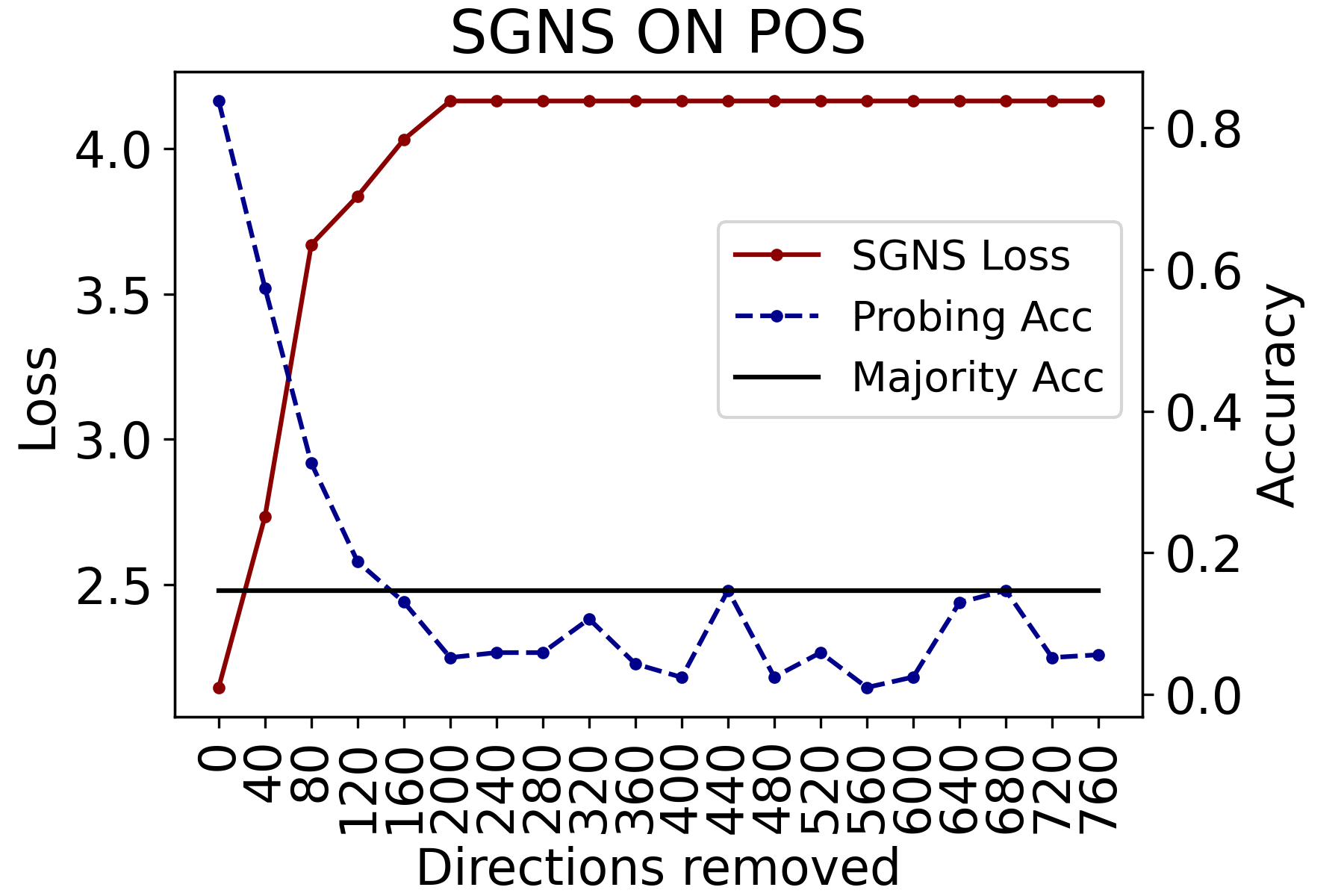}
\end{center}
\caption{INLP Dynamics for {\sc SGNS}. ON stands for OntoNotes, UD for Universal Dependencies. The UD EWT dataset has two types of POS annotation: coarse tags (UPOS) and fine-grained tags (FPOS).}
\label{fig:inlp_loss_acc_sgns}
\end{figure}

\begin{figure}[h]
\begin{center}
\includegraphics[width=.4\textwidth]{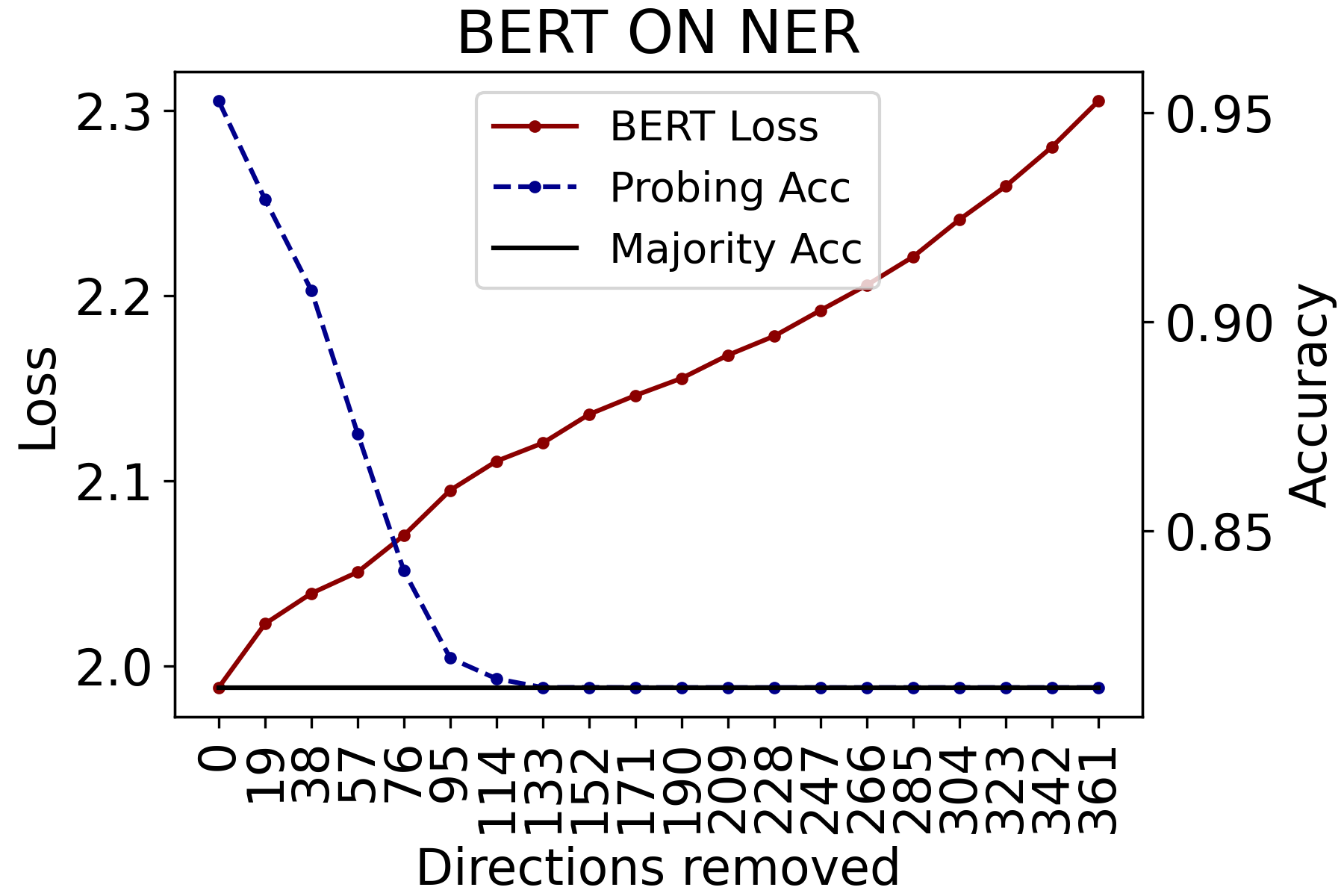}\hspace{15pt}
\includegraphics[width=.4\textwidth]{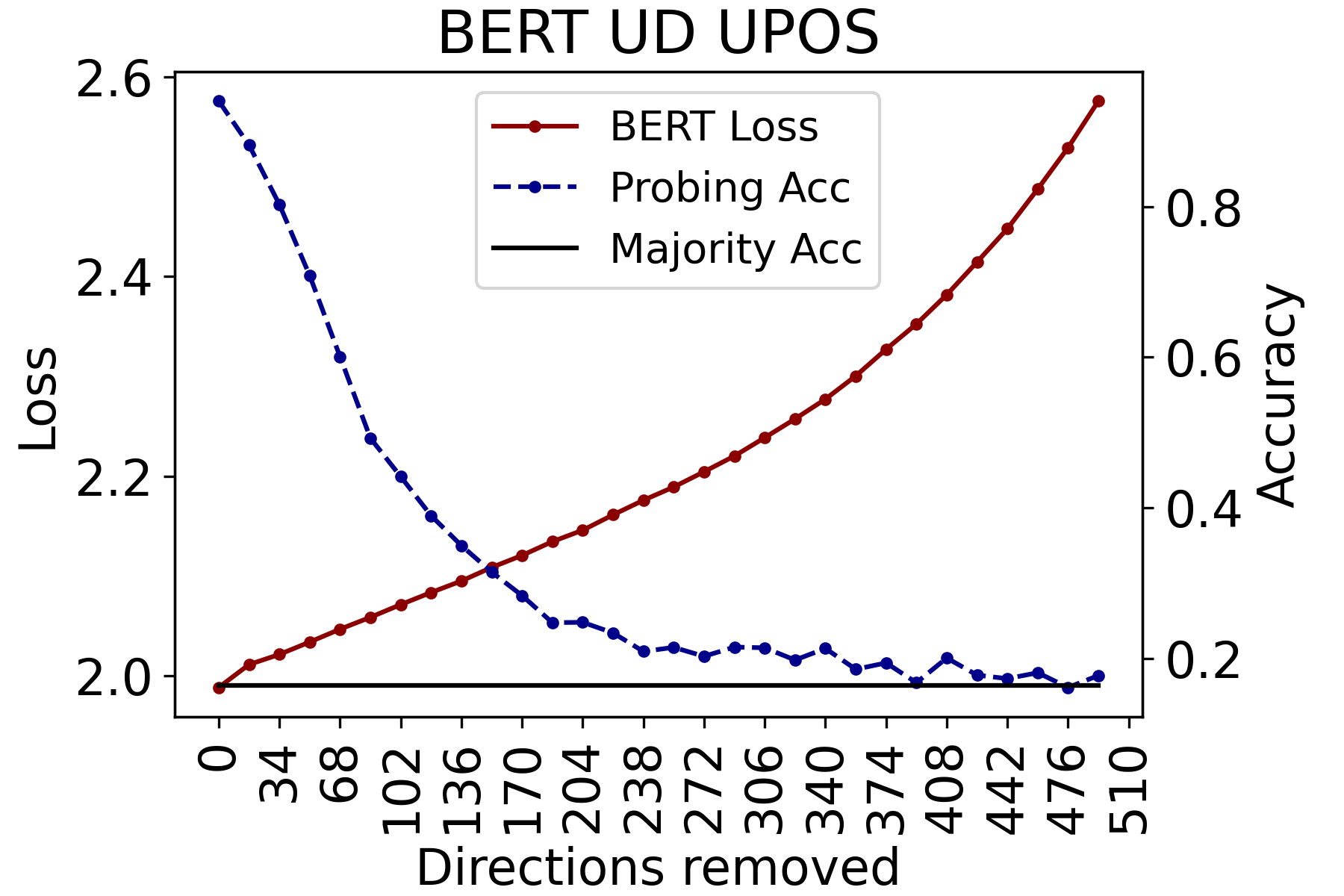}\\~\\
\includegraphics[width=.4\textwidth]{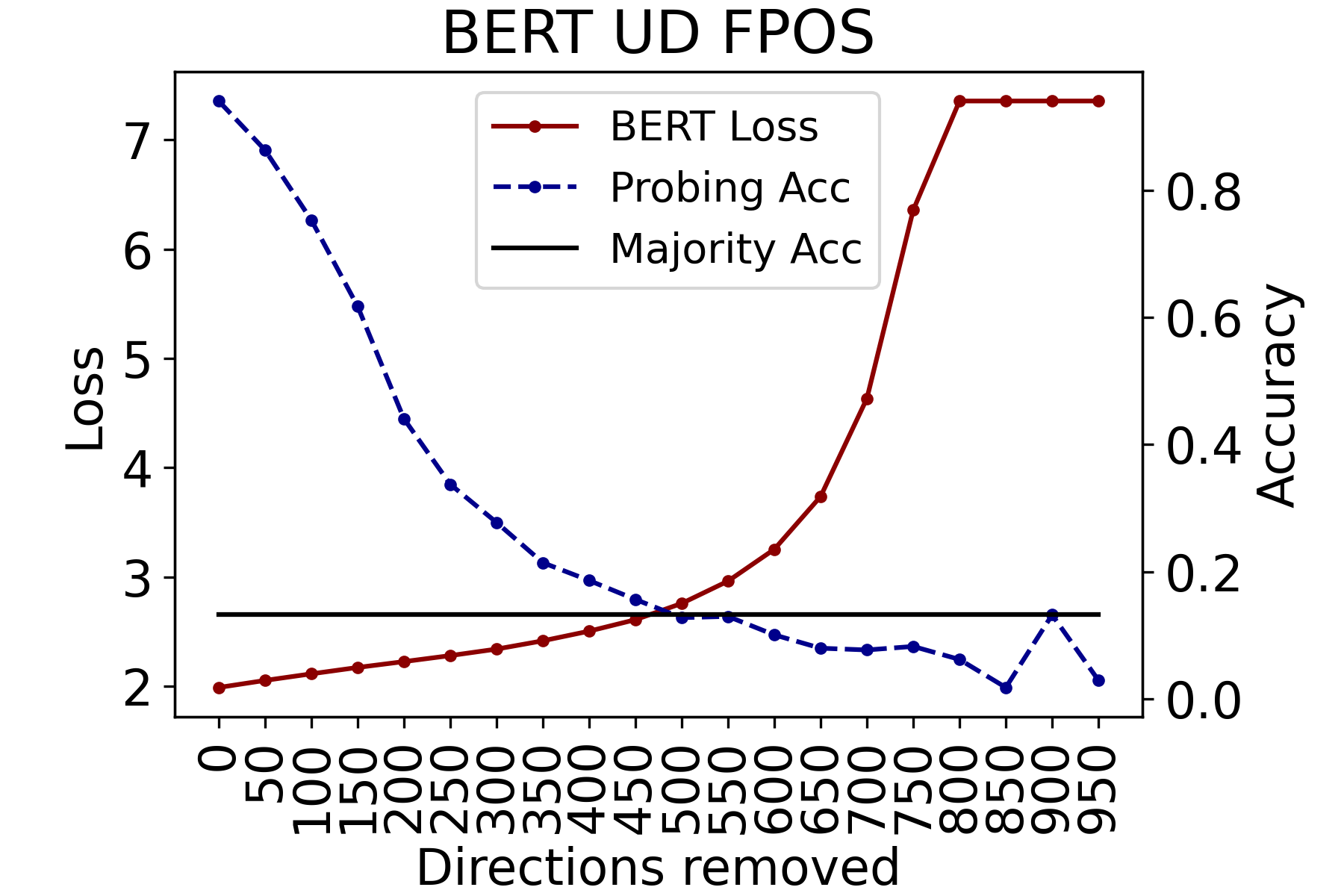}\hspace{15pt}
\includegraphics[width=.4\textwidth]{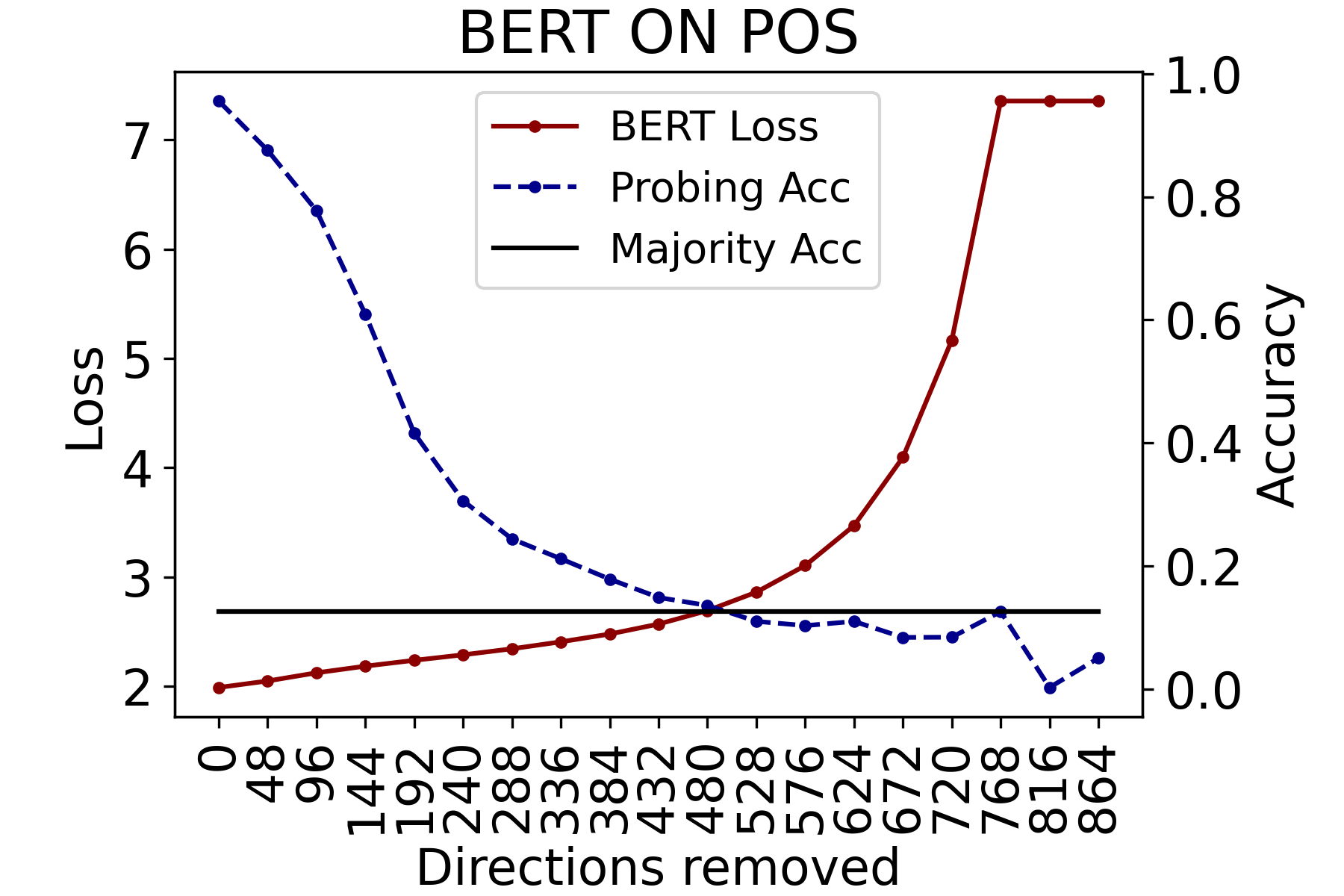}
\end{center}
\caption{INLP Dynamics for {\sc BERT}.}
\label{fig:inlp_loss_acc_bert}
\end{figure}

Figure~\ref{fig:inlp_loss} shows how quickly and how much the pre-training loss grows when applying the INLP procedure for various linguistic tasks. For each task, we show the minimum number of iterations at which the probing accuracy drops to (or below) the level of the majority accuracy.

We illustrate how pre-training loss and probing accuracy change with respect to INLP iteration in Figures~\ref{fig:inlp_loss_acc_sgns}~and~\ref{fig:inlp_loss_acc_bert}. The number of directions being removed at each INLP iteration is equal to the number of tags in the respective task.

\bibliography{ref}
\bibliographystyle{theapa}

\end{document}